\theoremstyle{plain}
\newtheorem{theorem}{Theorem}
\newtheorem{lemma}{Lemma}
\theoremstyle{definition}
\theoremstyle{remark}
\pgfplotsset{compat=1.18}
\title{RoS-Guard: Robust and Scalable Online Change Detection with Delay-Optimal Guarantees}
\author{
    Zelin Zhu\equalcontrib, Yancheng Huang\equalcontrib, Kai Yang\textsuperscript{\rm }\thanks{Corresponding author}\\
}
\begin{document}

\maketitle

\begin{abstract}
Online change detection (OCD) aims to rapidly identify change points in streaming data and is critical in applications such as power system monitoring, wireless network sensing, and financial anomaly detection. Existing OCD methods typically assume precise system knowledge, which is unrealistic due to estimation errors and environmental variations. Moreover, existing OCD problem optimization often struggle with efficiency in large-scale systems. To overcome these challenges, we propose RoS-Guard, a robust and optimal OCD algorithm tailored for linear systems with uncertainty. Through a tight relaxation and reformulation of the OCD optimization problem, RoS-Guard employs neural unrolling to enable efficient parallel computation via GPU acceleration. The algorithm provides theoretical guarantees on performance, including expected false alarm rate and worst-case average detection delay. Extensive experiments validate the effectiveness of RoS-Guard and demonstrate significant computational speedup in large-scale system scenarios.
\end{abstract}


\section{Introduction}

Online Change Detection (OCD) aims to detect distributional changes in a data stream as new observations arrive sequentially, often balancing detection delay and false alarm rate. The OCD problem in dynamic systems arises in various fields, including detecting false data injection attacks (FDIAs) in smart grids and identifying anomalies in wireless networks, as well as other information technology (IT) systems \cite{ yang2016deep, dou2019pc}.

Compared to OCD problems under stationarity assumptions, OCD in dynamic systems is particularly challenging because the inherent dynamics of the system can induce significant distributional changes over time, even in the absence of external events. These time-varying characteristics of pre- and post-change distributions complicate the task of reliably identifying abrupt changes.
Existing studies \cite{zhangjiangfan, adaptive_cusum, methodin14} often presuppose precise knowledge of the system. However, in practical scenarios, the parameters of the system commonly exhibit change-unrelated uncertainties due to both imperfect information acquisition and inherent dynamics.
For example, in smart grids, environmental fluctuations can alter line admittances, leading to inaccuracies in the system matrix and reducing the effectiveness of false data injection attack detection \cite{methodin14}.
Similarly, in MIMO systems, aging, quantization, and estimation errors often hinder accurate channel matrix estimation \cite{error_1}.
These challenges highlight the need for a robust detection framework with solid theoretical support.

The performance of OCD methods depends not only on the delay introduced by the detection procedure itself, but also on the computational efficiency of the algorithm, as its execution time may span a significant number of data arrivals, especially when the system dimension scales up, thereby affecting how promptly a change can be declared.
Unlike offline methods \cite{yu2024maskcd, li2024new}, which benefit from pre-collected data and parallel computation, OCD must process streaming data under real-time constraints, making efficiency improvements significantly more challenging and heavily dependent on algorithmic innovations.

To improve OCD efficiency, recent works have explored approaches such as buffering a fixed number of $m$ steps of observations \cite{zhang2024data} to facilitate efficient change detection in hidden Markov models, and bandit-based selective sensing \cite{gopalan2021bandit} to reduce observation costs by querying only a subset of informative sensors at each step. However, these methods either compromise real-time responsiveness or involve partial observability, and do not address model uncertainty. While \cite{zhangjiangfan} considers unknown, time-varying changes and achieves linear complexity in the number of observations, but its scalability with respect to system dimension remains unexamined.

Overall, the limited research on OCD methods under system uncertainty, combined with the significant challenge of improving detection efficiency in strict online settings, motivated us to design a robust and computationally efficient OCD algorithm that also scales effectively with system dimensionality. The key contributions of this paper are summarized as follows:

\begin{itemize}
    \item We propose RoS-Guard, a robust OCD algorithm that accounts for system uncertainty with theoretical guarantees.
    
    \item To improve detection efficiency and ensure scalability in large-scale systems, a method leveraging neural unrolling and GPU acceleration is introduced.
    
    \item Theoretical analysis of RoS-Guard’s performance is presented, and extensive experiments demonstrate the effectiveness and acceleration of the proposed method in large-scale settings.
\end{itemize}

\section{Related Work}

Early approaches to online change detection (OCD) often assume complete system knowledge and static pre-/post-change distributions. Classical methods such as those in \cite{adaptive_cusum, methodin14} adopt CUSUM-type statistics under known dynamics, with \cite{methodin14} further assuming fixed change locations—limiting applicability in dynamic settings. To handle model uncertainty, \cite{Minimax} studies robust OCD using least favorable distributions (LFDs) within known uncertainty sets, while \cite{Misspecified} relaxes the assumptions, offering theoretical guarantees for misspecified detection rules. Other works \cite{QCD_in_cyber_attack, zhangjiangfan} rely on residual-based detection, but often face challenges such as ill-conditioned covariances or relaxed formulations that hurt detection quality.

More recently, \cite{ULR} introduced the Uncertain Likelihood Ratio (ULR) test to handle partially known distributions, followed by \cite{WULR}, a faster variant. \cite{wasserstein} adopts a non-parametric Wasserstein ambiguity set to improve robustness. While these methods advance uncertainty handling, they often make simplifying assumptions: known ambiguity sets, independent or Markovian dynamics, fixed detection structure, or low-dimensional observations.

Emerging studies also consider computational efficiency of OCD algorithms. For example, \cite{gopalan2021bandit} develops bandit-style methods with partial observations, and \cite{zhang2024data} accelerates detection in hidden Markov models via buffered schemes. Yet these methods, they do not consider uncertainties inherent in the system parameters, which is critical for robust detection in practical dynamic environments. Unlike offline methods that exploit data parallelism, online change detection relies heavily on fast optimization to ensure real-time performance. Neural unrolling has been shown effective in accelerating large-scale constrained optimization with GPU support \cite{shi2021algorithm, he2022unsupervised, chen2025gpu}. 

Based on the existing research, we consider online change detection under more general system uncertainty conditions. Inspired by neural unrolling, we focus on designing scalable algorithms that efficiently handle large-scale and complex system settings.

\section{Problem Statement}

We consider the online change detection (OCD) problem in uncertain dynamic systems. The goal is to detect distributional changes in a sequence of observations as quickly as possible while satisfying a false alarm constraint. Our modeling follows Lorden’s minimax optimality criterion for worst-case detection delay.

Let $\Gamma$ denote the stopping time when a change is declared. The detection performance is measured by Lorden’s worst-case Average Detection Delay (ADD) \cite{Lordenminimaxproblem}:
\begin{equation}
J(\Gamma) \triangleq \sup_{\tau} \underset{\mathcal{F}{\tau}}{\operatorname{ess} \sup } ~ \mathbb{E}{\tau}\left[(\Gamma-\tau)^{+} \mid \mathcal{F}{\tau}\right],
\label{eq:delay}
\end{equation}
where $\mathcal{F}{\tau}$ is the $\sigma$-algebra generated by observations up to time $\tau$, and $\mathbb{E}{\tau}$ is the conditional expectation given a change occurs at time $\tau$. To control false alarms, the expected run length under no change, $\mathbb{E}{\infty}[\Gamma]$, must exceed a threshold $\beta > 0$. Thus, the OCD problem is formulated as:
\begin{equation}
\inf_{\Gamma} J(\Gamma) \quad \text{subject to} \quad \mathbb{E}_{\infty}[\Gamma] \geq \beta.
\label{eq:ocd_form}
\end{equation}

We focus on systems where the observation at time $t$ follows a dynamic linear model with time-varying parameters:
\begin{equation}
\mathbf{x}^{(t)} =
\begin{cases}
\mathbf{H} \boldsymbol{\theta}^{(t)} + \mathbf{n}^{(t)}, & t < t_a, \\
\left(\mathbf{H} + \Delta \mathbf{H}^{(t)}\right)\boldsymbol{\theta}^{(t)} + \mathbf{n}^{(t)}, & t \geq t_a,
\end{cases}
\label{eq:model}
\end{equation}
where $\mathbf{x}^{(t)} \in \mathbb{R}^M$ is the observed signal, $\boldsymbol{\theta}^{(t)} \in \mathbb{R}^N$ is an unknown system state, and $\mathbf{H} \in \mathbb{R}^{M \times N}$ is the nominal system matrix. The noise $\mathbf{n}^{(t)}$ is i.i.d. Gaussian, i.e., $\mathbf{n}^{(t)} \sim \mathcal{N}(\mathbf{0}, \sigma^2 \mathbf{I}_M)$. The post-change system is affected by unknown perturbations $\Delta \mathbf{H}^{(t)}$ from time $t_a$ onward.

Unlike previous works, we recognize that the precise knowledge of system matrix is typically unattainable in practical scenarios, and therefore assume it belongs to an uncertainty set $\mathcal{S}$, i.e., $\mathbf{H} \in \mathcal{S}$.
Let change in observations as $\mathbf{a}^{(t)} \triangleq \Delta\mathbf{H}^{(t)}\boldsymbol{\theta}^{(t)}$. The goal of the OCD is to detect the injected vector $\mathbf{a}^{(t)}$ as soon as possible after it actually occurs at time instant $t_{a}$.

Model (\ref{eq:model}) can be naturally generalized to long-range temporal systems by allowing the hidden state $\boldsymbol{\theta}^{(t)}$ to encode information from multiple past time steps, and by scaling up the dimensions of both $\mathbf{H}$ and $\boldsymbol{\theta}^{(t)}$ to capture richer dependencies and longer temporal horizons.
It also captures a broad class of uncertain linear systems where both the state vector and observation model evolve over time, and the change manifests as a structured perturbation to the system matrix (e.g. wireless MIMO systems\cite{imperfect_CSI_1,imperfect_CSI_2}), smart grid systems\cite{li2015cooperative}). 
\section{RoS-Guard}

This section introduces RoS-Guard, focusing on the problem modeled in (\ref{eq:model}). Specifically, (i) we begin with the reformulation of the Generalized Log-Likelihood Ratio (GLLR), which serves as the statistical evidence for detecting changes; (ii) the estimation of GLLR with system uncertainty is then formulated as a mixed-integer quadratic programming (MIQP) problem; (iii) to address computational challenges in high-dimensional settings, a relaxation and decomposition strategy is applied; and (iv) neural unrolling is employed to develop a scalable approximate solver that leverages GPU acceleration for fast computation. Finally, the overall workflow of RoS-Guard is summarized.

\subsection{Generalized Log-likelihood Ratio}
\label{GLLR}

From the model (\ref{eq:model}), the unknown and time-varying nature of $\boldsymbol{\theta}^{(t)}$ makes the component of $\mathbf{a}^{(t)}$ within the column space of $\mathbf{H}$ intrinsically unobservable. To address this, we extract the component of $\mathbf{a}^{(t)}$ that lies in the orthogonal complement of $\mathcal{C}(\mathbf{H})$ by introducing  
\begin{equation}
\boldsymbol{\mu}^{(t)} \triangleq \mathbf{P}_{\mathbf{H}}^{\perp} \mathbf{a}^{(t)} \in \mathcal{C}^{\perp}(\mathbf{H}),
\label{3-4}
\end{equation}  
where $\mathbf{P}_{\mathbf{H}}^{\perp} \triangleq \mathbf{I} - \mathbf{H}(\mathbf{H}^T\mathbf{H})^{-1} \mathbf{H}^T$ is the projection matrix onto $\mathcal{C}^{\perp}(\mathbf{H})$. To ensure the reliability of change detection, we assume a bounded magnitude for the nonzero entries of $\boldsymbol{\mu}^{(t)}$ i.e.
$\rho_L \leq |\mu_m^{(t)}| \leq \rho_H, m \in \mathcal{U}^{(t)},$
where $\mathcal{U}^{(t)}$ denotes the support of $\boldsymbol{\mu}^{(t)}$. The lower bound $\rho_L$ filters out noise-induced small perturbations, while the upper bound $\rho_H$ guards against extreme outliers due to model mismatch or measurement errors.

By estimating $\boldsymbol{\theta}^{(t)}$ and $\Delta\mathbf{H}^{(t)}$ via maximum likelihood (MLE) \cite{GCUSUM}, the detection rule follows the generalized likelihood ratio (GLR) form:
\begin{equation}
   \Gamma_R = \min \left\{ K : \max_{1 \leq k \leq K} \Lambda_k^{(K)} \geq h \right\},
   \label{GCUSUM}
\end{equation}
where $\Lambda_k^{(K)}$ denotes the log-likelihood ratio statistic between the pre- and post-change models.

Incorporating the orthogonal component $\boldsymbol{\mu}^{(t)}$ defined in (\ref{3-4}), the post-change model can be reformulated to depend only on the detectable directions. As a result, the log-likelihood ratio becomes:
\begin{equation}
\begin{aligned}
&\Lambda_k^{(K)} \triangleq \\
&\sup_{\{\mathcal{U}^{(t)}\},\, \mathbf{H}} 
\ln \frac{
\sup\limits_{\boldsymbol{\theta}^{(t)},\, \Delta\mathbf{H}^{(t)},\, \boldsymbol{\mu}^{(t)}} 
\prod_{t=k}^{K} f_q\left(\mathbf{x}^{(t)} \mid \boldsymbol{\theta}^{(t)}, \mathbf{H}, \Delta\mathbf{H}^{(t)}\right)}
{
\sup\limits_{\boldsymbol{\theta}^{(t)}} 
\prod_{t=k}^{K} f_p\left(\mathbf{x}^{(t)} \mid \boldsymbol{\theta}^{(t)}, \mathbf{H}\right)
},
\end{aligned}
\label{test statistic}
\end{equation}
where $f_p$ and $f_q$ are Gaussian densities corresponding to the pre- and post-change models, respectively.
$\Lambda_k^{(K)}$ can be further simplified to a sum of scalar statistics, i.e.,
$\Lambda_k^{(K)} = \sum_{t=k}^{K} v_t,$ where each $v_t$ represents the instantaneous evidence for change, and is derived as follows:
\begin{equation}
\begin{split}
v_t = \sup_{\mathcal{U}^{(t)}} \sup_{\boldsymbol{\mu}^{(t)},\mathbf{H}} 
\quad & \frac{1}{2\sigma^2} \left\{ 2\left(\boldsymbol{\mu}^{(t)}\right)^T \boldsymbol{x}^{(t)} - \left\| \boldsymbol{\mu}^{(t)} \right\|_2^2 \right\} \\
\text{s.t.} \quad & \rho_L \leq \left| \mu_m^{(t)} \right| \leq \rho_U, \quad \forall m \in \mathcal{U}^{(t)}, \\
& \mathbf{H}^T \boldsymbol{\mu}^{(t)} = \mathbf{0}, \\
& \mathbf{H} \in \mathcal{S}.
\end{split}
\label{v_t}
\end{equation}
Recursively, the accumulated evidence statistic $V_K$ can be computed as
\begin{equation} 
\begin{aligned}
V_K &\triangleq \max_{1 \leq k \leq K} \Lambda_k^{(K)} 
= \max \left\{ V_{K-1}, 0 \right\} + v_K,
\end{aligned}
\label{V_K}
\end{equation}
where $V_0=0$ and a change is declared when $V_K \geq h$.

Full derivation of $\Lambda_k^{(K)}$ is provided in the extended version (see the Links section)


\subsection{GLLR with System Uncertainty}

\label{sec:PR}
In this subsection, we focus on the uncertainty of the system matrix and detail the reformulation of equation (\ref{v_t}) as an MIQP problem. To simplify notation, we omit the time superscripts of variables.

We first reformulate the original $\sup$-based objective $v_t$ as an equivalent $\inf$ minimization problem for tractability and we represent the support set of the decision vector $\boldsymbol{\mu}$ using a binary vector $\boldsymbol{u} \in \{0,1\}^N$, where $u_m = 1$ indicates that the $m$-th component of $\boldsymbol{\mu}$ belongs to the active support set $\mathcal{U}$. Subsequently, we decomposed $\boldsymbol{\mu}$ into two nonnegative components with complementarity constraint, i.e, $\boldsymbol{\mu} = \boldsymbol{\mu}^{+} - \boldsymbol{\mu}^{-}, {\boldsymbol{\mu}^{+}}^T\boldsymbol{\mu}^{-} = \mathbf{0}$.

Then we employ the versatile constraint-wise uncertainty paradigm \cite{yang2014distributed,priceofrobustness}, which decouples the uncertainties among different rows in the system matrix $\mathbf{H}$, i.e., denoting its $i$-th column as $\mathbf{h}_i$, with each column lying in an uncertainty set $\mathcal{S}_i$. We further relax the hard constraint $\mathbf{H}^T \boldsymbol{\mu} = \mathbf{0}$ to the following robust form:
\begin{equation}
\mathbf{\bar{h}}_i^T \boldsymbol{\mu}_i + \max_{\mathbf{h}_i \in \mathcal{S}_i} (\mathbf{h}_i - \mathbf{\bar{h}}_i)^T \boldsymbol{\mu}_i \leq \varepsilon_i,
\end{equation}
where $\mathbf{\bar{h}}_i$ denotes the estimated nominal value of $\mathbf{h}_i$. 
For clarity, we illustrate our approach using the general polyhedral uncertainty set $\mathcal{S}_{i}=\left\{\boldsymbol{h}_{i} \mid \mathbf{D}_{i} \boldsymbol{h}_{i} \leq \boldsymbol{d}_{i} \right\},~i = 1, \cdots, N$. 
Our method, however, readily extends to other common uncertainty sets such as ellipsoids and D-norms. Its generality is further demonstrated in our experiments on various uncertainty sets. As a result, the reformulated problem of (\ref{v_t}) becomes:
\begin{equation}
\begin{split}
v_t = &-\inf_{\mathcal{U}} \inf_{\boldsymbol{\mu}^{(t)},\mathbf{H}} 
[-\mathcal{F}(\bm{\mu},\bm{x})]\\
\text{s.t.}~ & \left\{
\begin{array}{l}
\boldsymbol{\mu} = \boldsymbol{\mu}^{+} - \boldsymbol{\mu}^{-}, {\boldsymbol{\mu}^{+}}^T\boldsymbol{\mu}^{-} = \mathbf{0} \\
\rho_{L} \boldsymbol{u} \leq \boldsymbol{\mu}^{+}+\boldsymbol{\mu}^{-} \leq \rho_{U}\boldsymbol{u},\bm{u} \in \{0,1\}^N \\
u_m= 1 ~\text{if}~ m \in \mathcal{U},~\text{else}~ u_m = 0 \\
\bar{\mathbf{h}_i}^T \boldsymbol{\mu}_i + \max_{\mathbf{h}_{i}\in \mathcal{S}_i} (\mathbf{h}_i - \mathbf{\bar{h}}_i)^T \bm{\mu}_i \leq \varepsilon_i \\
\mathbf{D}_{i} \boldsymbol{h}_{i} \leq \boldsymbol{d}_{i},
\end{array}
\right.
\end{split}
\label{v_t_reform}
\end{equation}
where $\mathcal{F}(\bm{\mu},\bm{x})= \frac{1}{2 \sigma^{2}}  \left\{ \left\|(\boldsymbol{\mu}^{+} - \boldsymbol{\mu}^{-})\right\|_{2}^{2} -2 {(\boldsymbol{\mu}^{+} - \boldsymbol{\mu}^{-}) }^{T}{\boldsymbol{x}}\right\}$

The problem remains challenging due to the presence of a maximization over an uncertainty set, which leads to a nested bilevel structure and the non-convex orthogonality constraint ${\boldsymbol{\mu}^+}^T \boldsymbol{\mu}^- = 0$. Thus, by denoting $\bm{p}_i$ the dual variable and leveraging strong duality \cite{yang2014distributed}, the maximization constraint under the polyhedron uncertainty set can be equivalently reformulated as:
\begin{equation}
\boldsymbol{p}_{i}^{T} \boldsymbol{d}_{i} \leq \varepsilon_{i}, \mathbf{D}_{i}^{T} \boldsymbol{p}_{i} = \boldsymbol{\mu}, \boldsymbol{p}_{i} \geq 0 .
\label{ro_l}
\end{equation}
Introducing a binary auxiliary variable $\boldsymbol{b} \in \{0,1\}^N$ and using the upper bound $\rho_U$ as a sufficiently large constant.  The mutual exclusiveness between $\boldsymbol{\mu}^{+}$ and $\boldsymbol{\mu}^{-}$ at each index is replaced by the following two linear inequalities: 
\begin{equation}
\boldsymbol{\mu}^{+} + \rho_{U} \boldsymbol{b} \leq \rho_{U}\boldsymbol{1}, 
\boldsymbol{\mu}^{-} - \rho_{U} \boldsymbol{b} \leq \boldsymbol{0}.
\end{equation}
Thus, \eqref{v_t} is reformulated as the following MIQP problem:

\begin{equation}
\label{MIQP}
\begin{split}
&\operatorname{min} \frac{1}{2 \sigma^{2}}  \left\{ \left\|(\boldsymbol{\mu}^{+} - \boldsymbol{\mu}^{-})\right\|_{2}^{2} -2 {(\boldsymbol{\mu}^{+} - \boldsymbol{\mu}^{-}) }^{T}{\boldsymbol{x}}
\right\} \\
&s.t.\quad  \left\{\begin{array}{lc}
\boldsymbol{p}_{i}^{T} \boldsymbol{d}_{i} \leq \varepsilon_{i},  \mathbf{D}_{i}^{T} \boldsymbol{p}_{i} = \boldsymbol{\mu}^{+} - \boldsymbol{\mu}^{-}, \forall i  \\
\rho_{L} \boldsymbol{u} \leq{   \boldsymbol{\mu}^{+}+\boldsymbol{\mu}^{-}}     \leq \rho_{U}\boldsymbol{u} \\
\boldsymbol{\mu}^{+} + \rho_{U} \boldsymbol{b} \leq \rho_{U}\boldsymbol{1}\\
\boldsymbol{\mu}^{-} - \rho_{U} \boldsymbol{b} \leq \boldsymbol{0}\\
\end{array}\right.\\
&\text{var}:  \boldsymbol{u} \in \{0,1\}^{M},\boldsymbol{b} \in \{0,1\}^{M},\boldsymbol{\mu}^{+}, \boldsymbol{\mu}^{-}, \boldsymbol{p}_{i}\geq \boldsymbol{0}.
\end{split}
\end{equation}

\subsection{Relaxation for Efficient Optimization}

Mixed-integer problems are often solved using discrete methods like branch-and-bound, whose complexity grows rapidly with problem dimension. To improve efficiency, we reformulate the problem \eqref{MIQP} via continuous relaxation. The tightness of the relaxation directly affects the approximation quality. While semidefinite programming (SDP) is a classical relaxation with tight bounds in combinatorial tasks such as max-cut \cite{SDp_cut} and graph coloring \cite{SDP_color}. Here, we propose a relaxation for problem~\eqref{MIQP},that can be theoretically proven to provide better bound than the traditional SDP relaxation. 

Since the objective function of problem \eqref{MIQP} is separable, here we first consider the one-dimensional case. By continuously relaxing $b_m$ to $[0,1]$, we arrive at the following problem formulation.
\begin{equation}
\begin{aligned}
&g(\mu_m^+,\mu_m^-,u_m,b_m,\{\boldsymbol{p}_{i}\})= \\
&\begin{cases}0,
&\text{if } u_m = \mu_m^+ = \mu_m^-=0,\\
f(\mu_m^+,\mu_m^-),
&\text{if } u_m=1,\rho_L \leq \mu_m^{+} +\mu_m^{-}\leq \rho_H ,    \\+\infty,
&\text{otherwise,}\end{cases}  \\
&(\mu_m^{+}, \mu_m^{-}, b_m,\{\boldsymbol{p}_{i}\}) \in \mathcal P,
\end{aligned}
\end{equation}
where $f$ represents the objective function of \eqref{MIQP}, with feasible region $\mathcal{P}$ defined by all constraints excluding $\rho_L u_m \leq \mu_m^+ + \mu_m^- \leq \rho_H u_m$.

Then, we compute the convex envelope $\overline{co}(g)$ by constructing the convex hull in the epigraphical space of $g$~\cite{PCF_1, PCF_2}, yielding:
\begin{equation}
\begin{aligned}
&\overline{co}(g)(\mu_m^+,\mu_m^-,u_m,b_m,\{\boldsymbol{p}_{i}\}) \\
=&\begin{cases}0,
&\text{if } u_m = \mu_m^+ = \mu_m^-=0\\
h(\mu_m^+,\mu_m^-,u_m),
&\text{if } u_m\in(0,1],\rho_L \leq \mu_m^{+} +\mu_m^{-}\leq \rho_H ,    \\+\infty,
&\text{otherwise.}\end{cases}  \\
&(\mu_m^{+}, \mu_m^{-}, b_m,\{\boldsymbol{p}_{i}\}) \in \mathcal P
\end{aligned}
\end{equation}
where $h(\cdot)=\frac{1}{2\sigma^2}\{ u_m^{-1} (\mu_m^+ - \mu_m^-)^2 - 2x_m(\mu_m^+ - \mu_m^-) \}$ is a continuous relaxation of the objection in \eqref{MIQP} and defining $0/0:=0$. By introducing auxiliary variables $\phi_m$ such that $ u_m^{-1} (\mu_m^+ - \mu_m^-)^2\leq \phi_m$, which admits a second-order cone (SOC) formulation:
\begin{equation}
\begin{split}
&\operatorname{min}  \frac{1}{2 \sigma^{2}}  \sum_{m}   \bigg\{ \phi_m - 2 {({\mu}_m^{+} - {\mu}_m^{-}) }^{T}{{x}_m}    \bigg\}
  \\
&s.t.\quad  \left\{\begin{array}{lc}
\begin{Vmatrix} {\mu}_m^{+} - {\mu}_m^{-} \\\frac{\phi_m-u_m}{2}\end{Vmatrix} \leq           \frac{\phi_m+u_m}{2},\forall m\\
\boldsymbol{p}_{i}^{T} \boldsymbol{d}_{i} \leq \varepsilon_{i},  \mathbf{D}_{i}^{T} \boldsymbol{p}_{i} = \boldsymbol{\mu}  ^{+} - \boldsymbol{\mu}  ^{-}, \forall i  \\
\rho_{L} \boldsymbol{u}   \leq{   \boldsymbol{\mu}  ^{+}+\boldsymbol{\mu}  ^{-}}     \leq \rho_{U}\boldsymbol{u}   \\
\boldsymbol{\mu}  ^{+} + \rho_{U} \boldsymbol{b}   \leq \rho_{U}\boldsymbol{1}\\
\boldsymbol{\mu}  ^{-} - \rho_{U} \boldsymbol{b}   \leq \boldsymbol{0}\\
\end{array}\right.\\
&\text{var}:  \boldsymbol{u}\in[0,1]^N ,\boldsymbol{b}\in[0,1]^N ,\bm{\phi},\boldsymbol{\mu}  ^{+}, \boldsymbol{\mu}  ^{-}, \boldsymbol{p}_{i}\geq \boldsymbol{0}.
\end{split}
\label{PC_formulation}
\end{equation}
Problem \eqref{PC_formulation} is a Second-Order Cone Programming (SOCP) problem. We can demonstrate that this relaxation offers superior bound compared to traditional SDP relaxation.This is due to the absence of integer variables in the objective function, which leads to the degeneration of the SDP relaxation into a quadratic programming problem. Full derivations and equivalence proofs are deferred to the extended version (see the Links section).

\subsection{Neural Unrolling for GPU-acceleration}
Neural unrolling unfolds iterative optimization algorithms into trainable neural network layers, enabling efficient and adaptive solution approximation\cite{shi2021algorithm, he2022unsupervised, chen2025gpu}. This approach naturally supports GPU acceleration, allowing parallel computation that significantly improves scalability and speed. We first construct the Lagrangian function of problem \eqref{PC_formulation}, as follows:
\begin{equation}
\label{eq:lagrangian}
\begin{aligned}
&\min \mathcal{L}(\boldsymbol{\phi}, \boldsymbol{u}, \boldsymbol{b}, \boldsymbol{\mu}^+, \boldsymbol{\mu}^-, \boldsymbol{p}, \boldsymbol{\lambda}) \\
= &\min \frac{1}{2\sigma^2}  
\sum_m \left\{ \phi_m - 2 (\mu_m^+ - \mu_m^-)^\top x_m 
\right\} \\
&+ \sum_{\ell} \left\{ \sum_j \lambda_j g_j(\cdot) \right\}_\ell,\\
&\text{var}: \bm{\phi}, \boldsymbol{u} ,\boldsymbol{b} ,\boldsymbol{\mu}  ^{+}, \boldsymbol{\mu}  ^{-}, \boldsymbol{p},\bm{\lambda}.
\end{aligned}
\end{equation}
where $g_j(\cdot)$ denotes the constraint functions and $\lambda_j>0$ are the Lagrange multipliers, which acts as penalty coefficients.

The optimization of problem \eqref{eq:lagrangian} proceeds by alternately updating the primal variables and the dual variables $\lambda$, which represent the penalty coefficients for constraint violations. By defining an RNN network, the optimization process can be represented by the parameter updates of the RNN network with $K$ layers, as illustrated in \cref{fig:neural_unfolding}. 
\begin{figure}[h]
    \centering
    \includegraphics[width=\columnwidth,trim=90 50 90 10, clip]{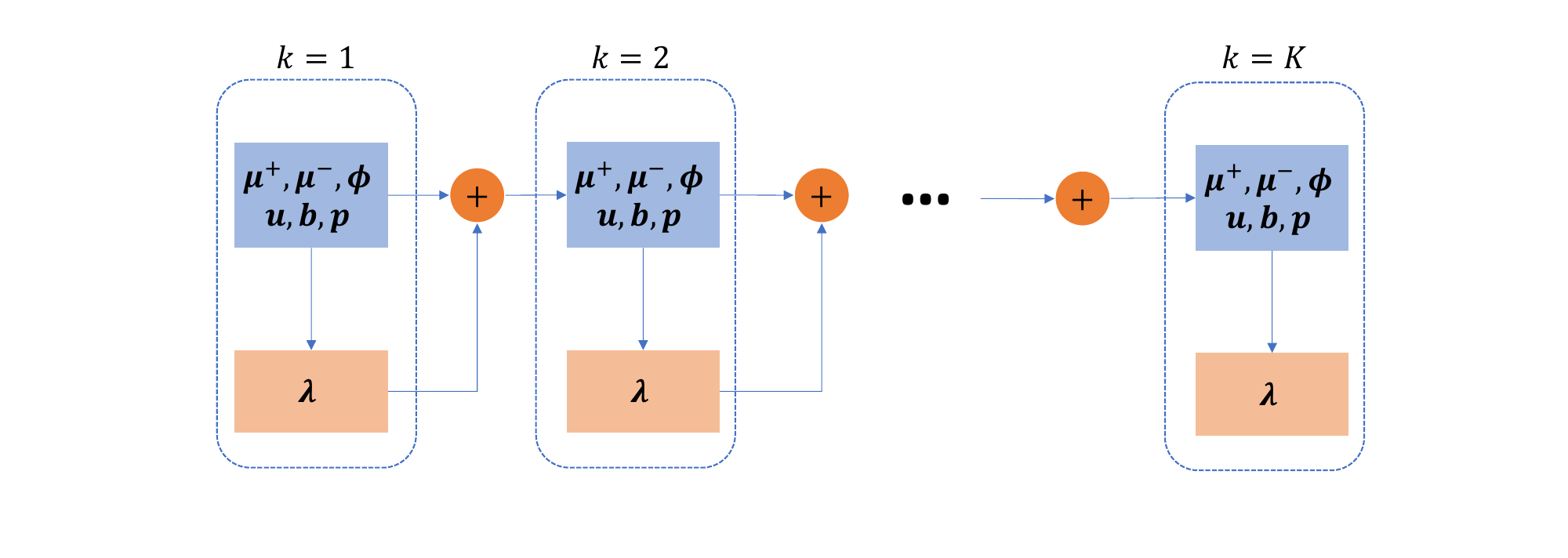} 
    \caption{Illustration of the Unrolled Optimization Network}
    \label{fig:neural_unfolding}
\end{figure}
The learnable parameters within each layer capture critical algorithmic components (e.g. step sizes). The optimization loss is defined based on objective of \eqref{eq:lagrangian}, guiding the training process. The iterative updates proceed until the difference between consecutive iterations falls below a threshold $\epsilon_\Delta$. Overall, the whole process of RoS-Guard is summarized in \cref{RoS-Guard}. 

\begin{algorithm}[H]
\caption{RoS-Guard algorithm}
\label{RoS-Guard}
\KwIn{$\{\boldsymbol{x}_t\}$, $\tau$, $\mathbf H, \rho_L,\rho_U,\sigma^2, \{ \varepsilon_{i} \}, \{\mathcal{S}_{i}\} $, $h,\epsilon$}
Initialize $t \leftarrow 0$, $V_0 \leftarrow 0$.\\
\If{System is small-scale}{ 
\Repeat{$V_t \geq h $}{
    $t \leftarrow t+1.$

  Based on the relaxation (\ref{PC_formulation}), employ branch and bound method to solve problem (\ref{MIQP}), and thereby obtain the value of $v_t$.

  Update  $V_t \leftarrow max\{V_{t-1},0\} + v_t.$
}
  \KwOut {$T_G \leftarrow t$, declare the change point $T_G$. }
}

\ElseIf{System is large-scale}{
    \Repeat{$V_t\geq h$}{
    Initialize neural model parameters\;\\
    \Repeat{$|\text{Loss}_{\text{prev}} - \text{Loss}_{\text{curr}}| < \epsilon$}{
        optimize \eqref{eq:lagrangian} with unrolling network\;
    }
    Obtain $v_t^{(\ell)}$ with optimized parameters and aggregate $v_t = \sum_{\ell=1}^{L} v_t^{(\ell)}$\;
    
    Update the decision statistic :  $V_t$
    
    }

    \KwOut {$T_G \leftarrow t$, declare the change point $T_G$. }
}

\end{algorithm}

\section{Theoretical Analysis}
As shown in Problem Statement \eqref{eq:delay} \eqref{eq:ocd_form}, OCD aims to minimize the worst-case ADD while satisfying a lower-bound constraint on the expected false alarm period (FAP). In this section, we provide a theoretical analysis of our proposed algorithm by establishing performance guarantees. Specifically, we first derive a sufficient condition to ensure the expected FAP constraint is met, and then present an upper bound on the worst-case ADD under any given detection threshold.

\subsection{Condition to Meet Expected False Alarm Period
Constraint}
According to (\ref{GCUSUM}), the change is declared at the first time $V_K \geq h$. Therefore, the value of the threshold $h$ holds great significance in the detection process.
In this subsection, we derive a sufficient condition for our method. For any prescribed lower bound $\gamma$ on the FAP, this condition provides a guideline for the selection of $h$ to ensure that the FAP constraint can be satisfied.   
\begin{theorem}
\label{theorem_2}
Suppose $\mathbf{x}^{(t)}$ is upper bounded, i.e, $ \| \mathbf{x}^{(t)}\|_2^2\leq \alpha$. The expected false alarm period of RoS-Guard is larger than $\gamma$ if :
\begin{equation}
    h \geq \frac{\alpha}{2\sigma^2}   \gamma .
    \label{eq_theorem2}
\end{equation}
\end{theorem}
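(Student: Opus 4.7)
The plan is to obtain a deterministic pathwise lower bound on the stopping time $\Gamma$ by upper-bounding the per-step increment $v_t$ of the CUSUM-type statistic, and then chaining through the recursion \eqref{V_K}. The key observation is that the quadratic objective defining $v_t$ admits a tight, constraint-free upper bound via completing the square.

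First, for any $\boldsymbol{\mu}^{(t)}$, $\mathbf{H}\in\mathcal{S}$ and support $\mathcal{U}^{(t)}$,
\[
\frac{1}{2\sigma^2}\bigl\{2(\boldsymbol{\mu}^{(t)})^T \mathbf{x}^{(t)} - \|\boldsymbol{\mu}^{(t)}\|_2^2\bigr\} = \frac{1}{2\sigma^2}\bigl(\|\mathbf{x}^{(t)}\|_2^2 - \|\boldsymbol{\mu}^{(t)} - \mathbf{x}^{(t)}\|_2^2\bigr) \leq \frac{\|\mathbf{x}^{(t)}\|_2^2}{2\sigma^2}.
\]
Since this bound does not depend on the optimization variables, it is preserved by the supremum in \eqref{v_t}; restricting to any feasible set (support, box bounds, orthogonality, and uncertainty-set constraints) can only lower the attained value. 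Combined with the assumption $\|\mathbf{x}^{(t)}\|_2^2 \leq \alpha$, this gives $v_t \leq \alpha/(2\sigma^2)$ for every $t$, regardless of whether $v_t$ is obtained from the exact MIQP \eqref{MIQP} or the SOCP relaxation \eqref{PC_formulation}; for the latter, the optimal $\phi_m$ satisfies $\phi_m \geq u_m^{-1}(\mu_m^+-\mu_m^-)^2 \geq (\mu_m^+-\mu_m^-)^2$ whenever $u_m\in(0,1]$, so completing the square applies summand-by-summand.

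Second, iterating the recursion \eqref{V_K} from $V_0 = 0$ gives, by an immediate induction, $V_K \leq K \cdot \alpha/(2\sigma^2)$. Hence the stopping rule $\Gamma = \min\{K : V_K \geq h\}$ must satisfy $\Gamma \geq 2\sigma^2 h/\alpha$ pathwise, and taking the expectation under the no-change measure yields $\mathbb{E}_\infty[\Gamma] \geq 2\sigma^2 h/\alpha \geq \gamma$ whenever $h \geq \alpha\gamma/(2\sigma^2)$, which is exactly \eqref{eq_theorem2}. The only substantive obstacle is the per-step bound in the first step — in particular, carefully verifying that the unconstrained completion-of-the-square bound survives in both formulations used by \cref{RoS-Guard}; the CUSUM induction and the final expectation are routine.
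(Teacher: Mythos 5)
Your proof is correct, and it takes a genuinely different route from the paper's after the first step. The per-step bound is the same in both: completing the square gives $v_t \leq \tfrac{1}{2\sigma^2}\|\tilde{\mathbf{x}}^{(t)}\|_2^2 \leq \tfrac{\alpha}{2\sigma^2}$ (the paper states it for the projected observation $\tilde{\mathbf{x}}^{(t)}$, whose norm is dominated by $\|\mathbf{x}^{(t)}\|_2$, so the two are interchangeable here). Where you diverge is in how this bound is propagated to the run length. The paper works in expectation: it bounds $\mathbb{E}_{\infty}\{V_{\Gamma_R}\}$ by $\sum_{t}\mathbb{E}_{\infty}\{\tfrac{1}{2\sigma^2}\|\tilde{\mathbf{x}}^{(t)}\|_2^2\}\,\mathbb{P}_{\infty}(\Gamma_R \geq t)$ via the monotone convergence theorem and the independence of $\|\tilde{\mathbf{x}}^{(t)}\|_2^2$ from the event $\{\Gamma_R \geq t\}$ (a Wald-identity-style argument), arrives at $\mathbb{E}_{\infty}\{V_{\Gamma_R}\} \leq \tfrac{\alpha}{2\sigma^2}\mathbb{E}_{\infty}\{\Gamma_R\}$, and combines this with $\mathbb{E}_{\infty}\{V_{\Gamma_R}\}\geq h$. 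You instead turn the per-step bound into the deterministic pathwise estimate $V_K \leq K\alpha/(2\sigma^2)$ via the recursion \eqref{V_K}, which yields $\Gamma_R \geq 2\sigma^2 h/\alpha$ on every sample path before any expectation is taken. Your version is more elementary (no measurability or independence arguments are needed, which matters since under no change $\mathbf{x}^{(t)}$ itself is not i.i.d.\ --- only its projection $\tilde{\mathbf{x}}^{(t)}$ is) and it delivers a strictly stronger conclusion, an almost-sure lower bound on the run length rather than one in expectation; the paper's expectation-based route would only become necessary if the increment bound held merely in mean rather than pointwise. Your side remark verifying that the bound survives the SOCP relaxation \eqref{PC_formulation} via $\phi_m \geq u_m^{-1}(\mu_m^+-\mu_m^-)^2 \geq (\mu_m^+-\mu_m^-)^2$ is a useful check the paper does not make explicit.
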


\begin{proof}[Proof Sketch]
Recall that $\Gamma_R$ represents the stopping time of RoS-Guard as shown in (\ref{GCUSUM}).
The core of proof resides in establishing the relationship between $\mathbb{E}_{\infty}\{V_{\Gamma_R}\}$ and $\mathbb{E}_{\infty}\{  \Gamma_R \}$, which is achieved by the upper bound on $v_t$.
When the false alarm is declared, we have $\mathbb{E}_{\infty}\{V_{\Gamma_R}\} \geq h$, which thereby elucidates the relationship between the $\mathbb{E}_{\infty}\{  \Gamma_R \}$ and $h$.
The complete proof can be found in \textbf{Proof 2}.
\end{proof}

\subsection{Upper Bound on the Worst-Case Expected Detection
Delay}
For OCD task, it's imperative to evaluate the worst-case expected detection delay of the detector.
In this subsection, we provide the theoretical analysis of the worst-case expected detection delay of the proposed method.
Define $J(\Gamma_R)$ as Lorden's worst-case expected detection delay.
The subsequent theorem concerning the worst-case expected detection delay can be obtained.
\begin{theorem}
\label{theorem_3}
Let $d_i$ represents the diameter of uncertainty set $\mathcal{U}_{i}$.
For any threhold $h$, by employing Wald’s approximations \cite{tartakovsky2014sequential}, set $\varepsilon_{i} \geq d_i \rho_H M^{\frac12}$, the worst-case expected detection delay of SDPCUSUM and BBCUSUM can be bounded as follows,
\begin{equation}
J(\Gamma_{R}) \leq \frac{2 h {\sigma}^2}{ {\rho_L}^2}, 
\label{eq_theorem3}
\end{equation}
\end{theorem}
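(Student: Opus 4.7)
The plan is to lower-bound the expected post-change increment $\mathbb{E}_{\tau}[v_t]$ of the CUSUM recursion $V_K=\max\{V_{K-1},0\}+v_K$ uniformly in $(\tau,\mathcal{F}_\tau)$, and then invoke Wald's approximation from \cite{tartakovsky2014sequential} in the form $J(\Gamma_R)\lesssim h/\inf_{\tau,\mathcal{F}_\tau}\mathbb{E}_\tau[v_t]$. The target bound $2h\sigma^2/\rho_L^2$ will follow once I show $\mathbb{E}_\tau[v_t]\ge\rho_L^2/(2\sigma^2)$.

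The most delicate step, which I will handle first, is a feasibility verification: the true post-change pair $(\boldsymbol{\mu}_\star^{(t)},\mathbf{H}_\star)$, with $\boldsymbol{\mu}_\star^{(t)}:=\mathbf{P}_{\mathbf{H}_\star}^{\perp}\mathbf{a}^{(t)}$ and $\mathbf{H}_\star$ the (unknown) true system matrix, must be admissible in the inner problem (\ref{v_t_reform}). The box constraint $\rho_L\le|\mu^{(t)}_{\star,m}|\le\rho_H$ is granted by the model's assumption on admissible changes, and $\mathbf{H}_\star\in\mathcal{S}$ by hypothesis. What remains is the robust orthogonality $\max_{\mathbf{h}_i\in\mathcal{S}_i}\mathbf{h}_i^{T}\boldsymbol{\mu}_\star^{(t)}\le\varepsilon_i$. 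Because $\mathbf{h}_{\star,i}^{T}\boldsymbol{\mu}_\star^{(t)}=0$ and any $\mathbf{h}_i\in\mathcal{S}_i$ is within diameter $d_i$ of $\mathbf{h}_{\star,i}$, Cauchy--Schwarz combined with $\|\boldsymbol{\mu}_\star^{(t)}\|_2\le\rho_H M^{1/2}$ yields
\begin{equation*}
\mathbf{h}_i^{T}\boldsymbol{\mu}_\star^{(t)}=(\mathbf{h}_i-\mathbf{h}_{\star,i})^{T}\boldsymbol{\mu}_\star^{(t)}\le d_i\rho_H M^{1/2},
\end{equation*}
so the hypothesis $\varepsilon_i\ge d_i\rho_H M^{1/2}$ is exactly what is needed to make the true parameters feasible.

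With feasibility in place, I will evaluate the objective of (\ref{v_t}) at $(\boldsymbol{\mu}_\star^{(t)},\mathbf{H}_\star)$ under the post-change model $\mathbf{x}^{(t)}=\mathbf{H}_\star\boldsymbol{\theta}^{(t)}+\mathbf{a}^{(t)}+\mathbf{n}^{(t)}$. Orthogonality eliminates the contribution from $\mathbf{H}_\star\boldsymbol{\theta}^{(t)}$, and idempotence of the projector gives $(\boldsymbol{\mu}_\star^{(t)})^{T}\mathbf{a}^{(t)}=\|\boldsymbol{\mu}_\star^{(t)}\|_2^2$. Taking expectation of the quadratic objective then reduces to
\begin{equation*}
\mathbb{E}_\tau\!\left[\tfrac{1}{2\sigma^2}\bigl(2(\boldsymbol{\mu}_\star^{(t)})^{T}\mathbf{x}^{(t)}-\|\boldsymbol{\mu}_\star^{(t)}\|_2^2\bigr)\right]=\frac{\|\boldsymbol{\mu}_\star^{(t)}\|_2^2}{2\sigma^2}\ge\frac{\rho_L^2}{2\sigma^2},
\end{equation*}
where the final inequality uses that after the change at least one active component has magnitude at least $\rho_L$. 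Since $v_t$ is defined as a supremum over all feasible points, $v_t$ dominates this value, and the drift estimate $\mathbb{E}_\tau[v_t]\ge\rho_L^2/(2\sigma^2)$ holds uniformly in $(\tau,\mathcal{F}_\tau)$.

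Finally, I will invoke Wald's approximation to convert this drift into a delay bound. Because the CUSUM recursion restarts at zero and the post-change increments have positive mean drift bounded below by $\rho_L^2/(2\sigma^2)$, the first-crossing time of level $h$ has worst-case conditional expectation at most $h$ divided by that drift, yielding $J(\Gamma_R)\le 2h\sigma^2/\rho_L^2$ after the outer $\sup$ and essential supremum. The main obstacle of the argument is the feasibility step: the two distinct sources of uncertainty---the unknown true $\mathbf{H}_\star$ and the unknown post-change perturbation $\Delta\mathbf{H}^{(t)}$---must both be absorbed by the single margin $\varepsilon_i$, and the $d_i\rho_H M^{1/2}$ scaling is exactly tight for this purpose.
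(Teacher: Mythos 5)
Your proposal is correct and follows essentially the same route as the paper's proof: feasibility of the true $(\boldsymbol{\mu}_\star^{(t)},\mathbf{H}_\star)$ under the robust constraint via the Cauchy--Schwarz bound $d_i\rho_H M^{1/2}\le\varepsilon_i$, the resulting drift lower bound $\mathbb{E}_{t_a}[v_t]\ge\rho_L^2/(2\sigma^2)$, and a Wald-type conversion of drift into delay with the overshoot neglected. The paper merely formalizes your last step through two lemmas (a conditional-expectation bound relating the delay to $\mathbb{E}\{V_{\Gamma_R}\mathds{1}\{\Gamma_R\ge t_a\}\}$, and an equalizer rule showing the worst case is attained at $V_{t_a-1}=0$), but the substance is identical.
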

\begin{proof}[Proof Sketch]
We begin by introducing a lower bound on the expectation of $v_t$ when change occurs at a time instant $t_a$ based on the bounds of $\boldsymbol{\mu}$. Subsequently, we proceed to derive an upper bound on $\mathbb{E}_{t_{a}}\{(\Gamma_R-t_{a}+1)^{+}  \mid \mathcal{F}_{t_{a}-1} \}$ for the proposed method. Next, we demonstrate that the proposed method achieves the equalizer rule given the pre- and post-change model elaborated in \eqref{eq:model}, Finally, leveraging these insights and employing Wald’s approximations, we substantiate Theorem 2. The complete proof can be found in the extended version (see the Links section).
\end{proof}

\section{Experiments}
\label{experiments}
To evaluate the effectiveness of our proposed RoS-Guard algorithm, we conduct experiments on two representative online change detection scenarios: (i) attack injection detection in smart grids, and (ii) channel blockage detection in MIMO wireless systems. We compare our method against two state-of-the-art baselines, RGCUSUM and CyberQCD, which are compatible with our system modeling. Furthermore, to assess the scalability of RoS-Guard in different scale systems, we measure and compare the detection latency under varying observation dimensions.
\subsection{Datasets}

\textbf{Dataset \uppercase\expandafter{\romannumeral1}: } 
We follow \cite{adaptive_cusum} to formulate FDIA detection in smart grids using the dynamic DC power flow model in \eqref{eq:model}. The system includes $N+1$ buses and $M$ meters, with state $\boldsymbol{\theta}^{(t)} \in \mathbb{R}^N$ and observations $\mathbf{x}^{(t)} \in \mathbb{R}^M$. The measurement matrix $\mathbf{H} \in \mathbb{R}^{M \times N}$ is determined by grid topology and line susceptance \cite{kosut2011malicious, cui2012coordinated}. We conduct experiments using the IEEE-14 bus system, a standard benchmark. The system state is initialized using "case14" in MATPOWER \cite{matpower}, and we simulate attacks by injecting random vectors into $\mathbf{x}^{(t)}$ from time $t_a$. Following \cite{methodin14}, we assume the measurement matrix is fully known except in one sub-region. Uncertainty sets and perturbed $\mathbf{H}$ are detailed in the extended version (see the Links section).\\
\textbf{Dataset \uppercase\expandafter{\romannumeral2}: } 
MIMO technology leverages multiple antennas at both the transmitter and receiver to enhance transmission efficiency via multiple signal paths \cite{MIMO1}. However, these paths are susceptible to blockages from static or dynamic obstacles such as buildings, vehicles, or humans \cite{MIMO2}. Let $\mathbf{x}^{(t)} \in \mathbb{R}^{M}$ be the received signal vector and $\boldsymbol{\theta}^{(t)} \in \mathbb{R}^{N}$ the transmitted signal vector, where $M$ and $N$ are the numbers of receive and transmit antennas, respectively. The channel matrix $\mathbf{H} \in \mathbb{R}^{M \times N}$ contains the channel gains between antenna pairs. Due to environmental uncertainties, these coefficients are typically imprecise. We adopt a $2 \times 4$ MIMO testbed based on USRP devices. As shown in the extended version (see the Links section), various blockage scenarios are considered. For each case, we continuously collect the received signals and apply OCD detectors to identify blockage events.

\subsection{Performance Evaluation}
In the introduction, we underscored the limitations of conventional OCD methods due to their reliance on certain assumptions. These assumptions encompass constant distributions before and after a change, as well as specific conditions for the system state. Consequently, these methods prove unsuitable for addressing the problem under investigation.  
We implemented the Adaptive CUSUM algorithm \cite{adaptive_cusum} In our experiments to exemplify this fact. 
Furthermore, a performance comparison is conducted between our algorithm, the method proposed in \cite{methodin14} (referred to as CyberQCD for simplicity), and the RGCUSUM algorithm \cite{zhangjiangfan}, which claims to be the state-of-the-art detector for the OCD in dynamic systems.
To better showcase the effectiveness and superiority of the proposed method, we approximating the value of $v_t$ by directly using the optimal objective function value of (\ref{PC_formulation}).

After executing each detector 500 runs, the experimental results on data I and data II are visually presented in \cref{fig:attack_detection_and_uncertainties} and \cref{fig:blockage_detection}.

\begin{figure}[h]
    \centering
    \includegraphics[width=\columnwidth]{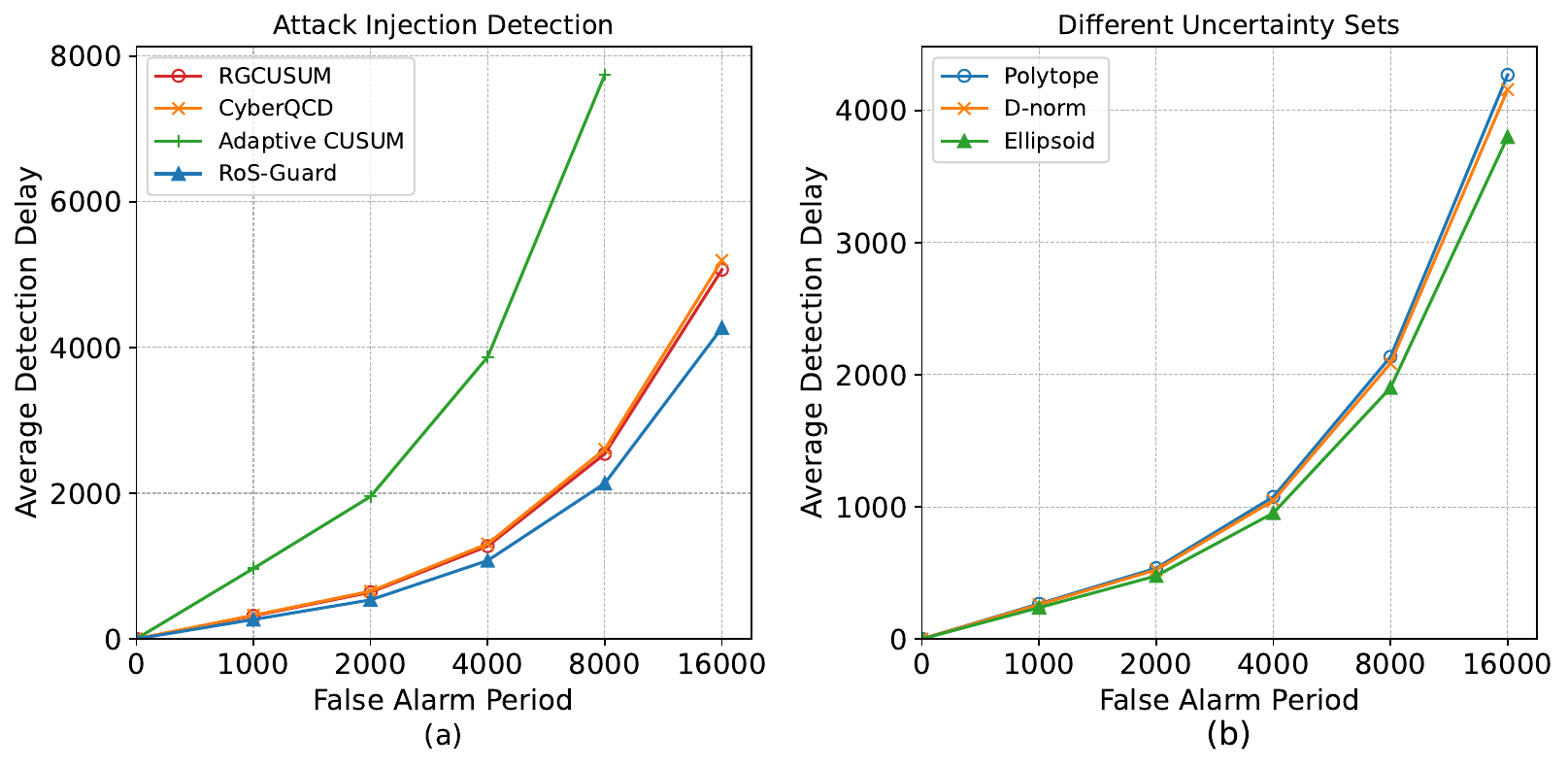} 
    \caption{(a) Performance Comparison with Attack Injection Detection in Smart Grids System, (b) Performance Comparison with Various Uncertainty Sets. }
    \label{fig:attack_detection_and_uncertainties}
\end{figure}

\begin{figure}[h]
    \centering
    \includegraphics[width=\columnwidth]{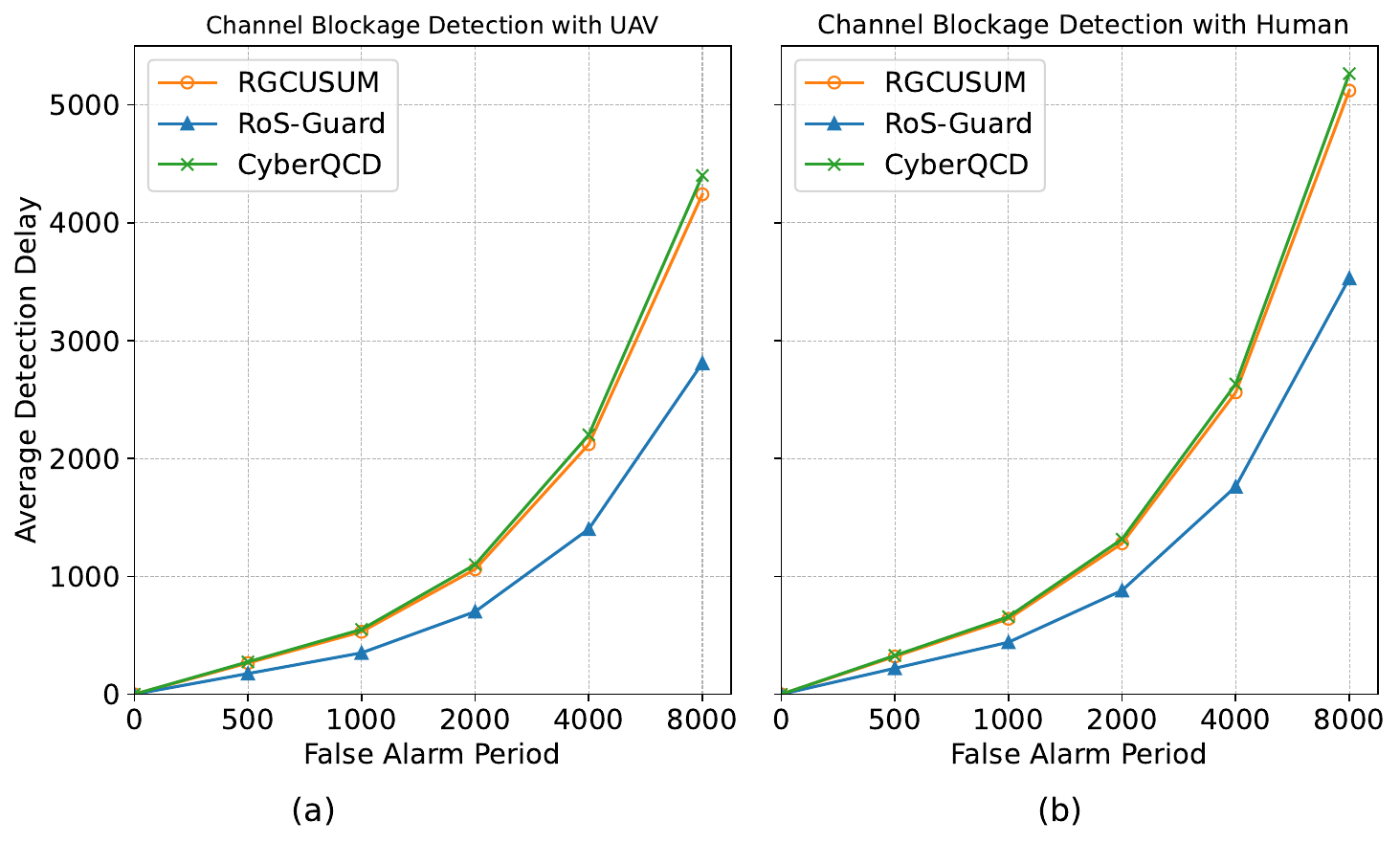} 
    \caption{Performance Comparison with Blockage Detection in Wireless MIMO System.}
    \label{fig:blockage_detection}
\end{figure}

Following conventional metrics for online change detection, we evaluate the ADD of different detectors under the same FAP. 
Recall that FAP signifies the point at which the detector stops when no change is detected, thus serving as a measure to assess the risk of false alarms. It is evident from \cref{fig:attack_detection_and_uncertainties} (a) and \cref{fig:blockage_detection} that our method consistently exhibits a smaller average detection delay for any given FAP, which emphasizes the superior performance achieved by RoS-Guard.

From \cref{fig:attack_detection_and_uncertainties}(a), we can observe that the performance of the adaptive CUCUM algorithm is unacceptable.  This can primarily be attributed to the inconsistency between its assumptions and the experiment setup. Specifically, the adaptive CUCUM algorithm assumes a Gaussian distribution for the systems state, which does not hold in our experiments. 
RGUCUSM and CyberQCD show comparable performance, with both falling short in comparison to the performance of the RoS-Guard. 
This result is in accordance with our expectation, as both the RGUCUSM and CyberQCD methods are developed based on the perfectly known $\mathbf{H}$.
When the system matrix $\mathbf{H}$ is inaccurate, their detection performance rapidly deteriorates. 
In contrast, the RoS-Guard algorithm effectively handles uncertainties within the system matrix, ensuring robust and reliable detection performance.
Furthermore, we conducted additional experiments to showcase the performance of the RoS-Guard algorithm when the system matrix $\mathbf{H}$ is assumed to belong to different uncertainty sets. These uncertainty sets, which are commonly employed in practical applications, include ellipsoid uncertainty sets, D-norm uncertainty sets, and polyhedron uncertainty sets \cite{yang2014distributed}. Detailed information regarding the settings of these uncertainty sets can be found in the extended version (see the Links section). The experiments were conducted with a total of 500 Monte Carlo runs. As depicted in \cref{fig:attack_detection_and_uncertainties} (b), the RoS-Guard algorithm consistently demonstrates superior performance across diverse uncertainty sets, underscoring its remarkable ability to generalize and adapt to varying conditions.

\subsection{Evaluation of GPU Parallel Computing}

To compare the detection time differences between CPU- and GPU-based algorithms under varying system scales, we follow the setup of Dataset I and consider attack injection under system observations of size $2^k$
. By tuning parameters to ensure comparable detection performance, we evaluate and compare the execution times of the CPU and GPU algorithms. The average runtime is computed over 50 randomized trials. The detailed experimental settings are provided in the extended version (see the Links section).

Experimental result in \cref{fig:detection_time} shows that our GPU-based neural unrolling algorithm exhibits a significant speed advantage when the system scale is large. Specifically, when the system reaches a scale of $2^8$, the GPU-based parallel algorithm achieves more than a 20× speedup.

\begin{figure}[h]
    \centering
    \includegraphics[width=0.6\columnwidth]{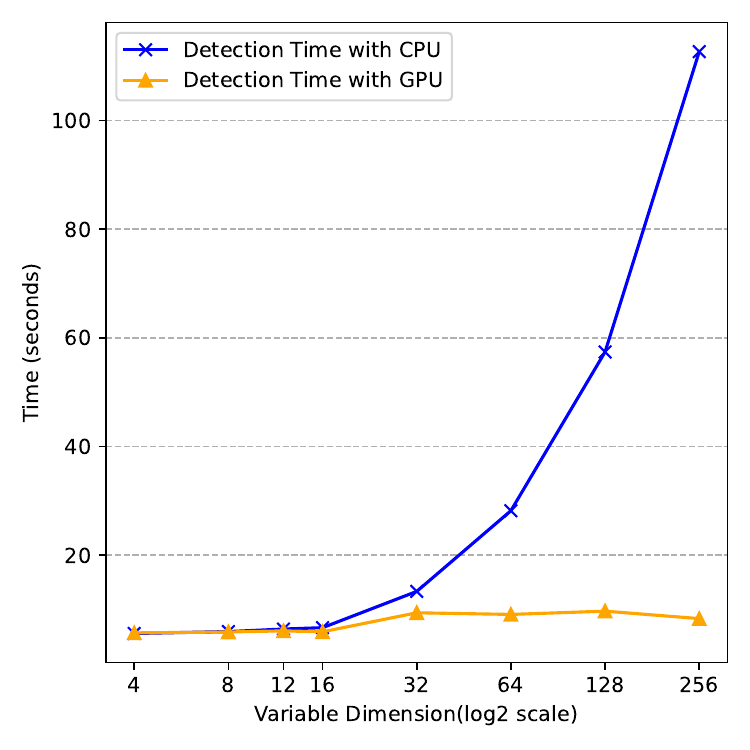} 
    \caption{Detection Time Comparison between Neural Unrolling with GPU and MIQP with CPU in Different System Scale.}
    \label{fig:detection_time}
\end{figure}
It is worth noting that if we slightly violate the online change detection setting by grouping a small number 
$T$ of consecutive observations into a batch, the GPU-based method does not incur additional execution time. This indicates that when the algorithm's execution time is comparable to the system's observation interval, the efficiency advantage of the GPU-based method scales approximately by a factor of $T$

\subsection{Numerical Verification of \cref{theorem_2} and \cref{theorem_3}}
In this subsection, we numerically verify \cref{theorem_2} and \cref{theorem_3} on Data \uppercase\expandafter{\romannumeral1}. The number of Monte Carlo runs is 100. 
In order to verify \cref{theorem_2}, for any prescribed $\gamma$, we set the value of $h$ to be the lower bound calculated from the right side of inequality in (\ref{eq_theorem2}). And then we evaluate the actual average FAP of RoS-Guard with the same $h$.  The results are shown in \cref{fig:numerical_verification}(a).
It is seen that under every setting of $(\rho_{L},\rho_{H})$, the actual FAP is always larger than $\gamma$ for our detector, which confirms \cref{theorem_2}.

To demonstrate the validity of \cref{theorem_3}, we compute the upper bounds on $J(\Gamma_R)$ for different $h$ values by employing (\ref{eq_theorem3}).  And then for each value of $h$, we numerically calculate the corresponding actual ADD of the proposed method.  The experimental results are shown in \cref{fig:numerical_verification}(b). It is seen that the actual ADD of RoS-Guard is consistently smaller than the upper bounds on $J(\Gamma_R)$, which validates \cref{theorem_3}.

\begin{figure}[htbp]
    \centering
    \includegraphics[width=\columnwidth]{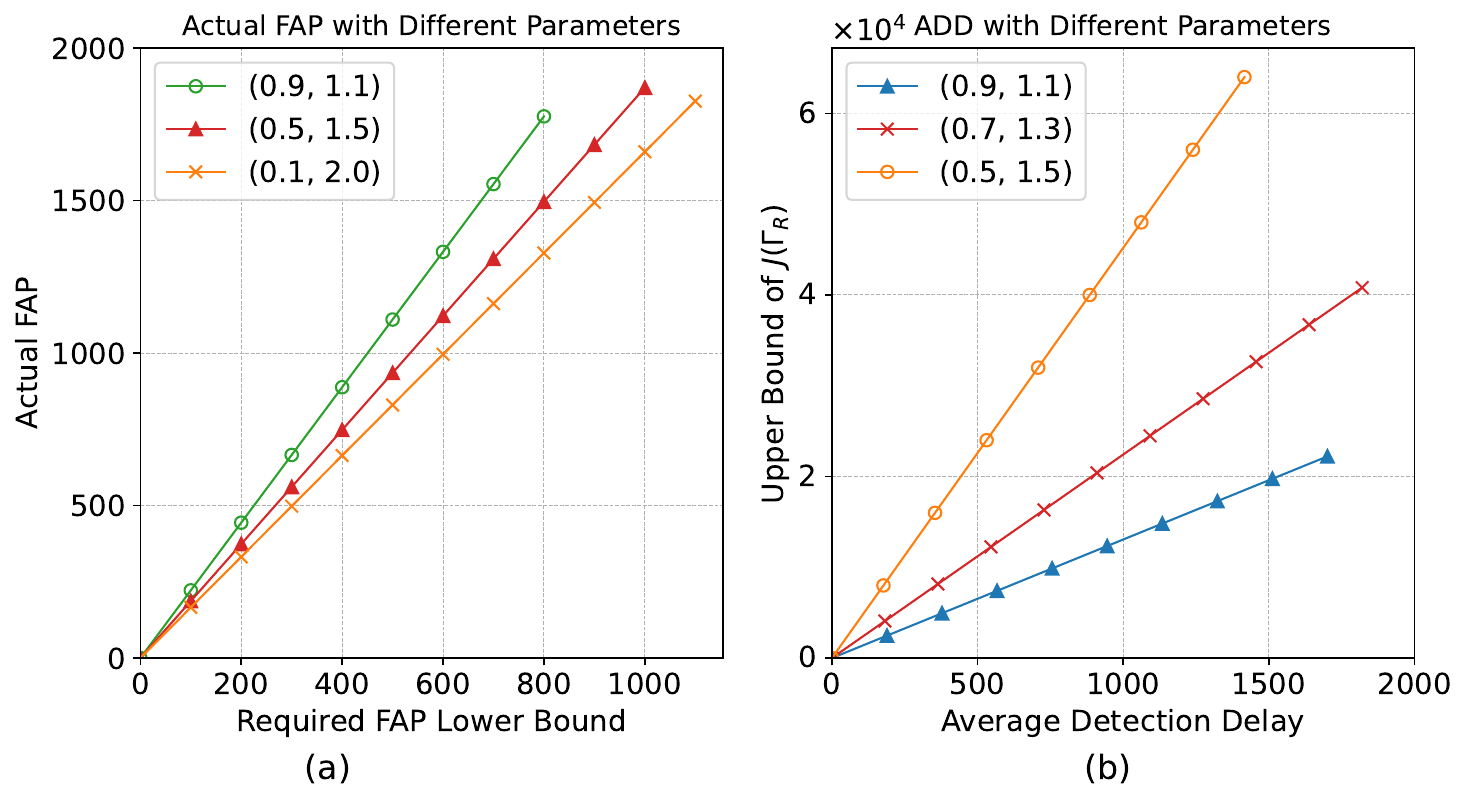} 
    \caption{Numerical Verification of Theorems}
    \label{fig:numerical_verification}
\end{figure}

\section{Conclusion}
In real-world applications, system uncertainty caused by estimation errors, model drift, and environmental disturbances presents a fundamental challenge to online change detection. 
To address this challenge, we propose RoS-Guard, a robust and scalable detection framework. Our method explicitly addresses system uncertainty. Based on Lorden’s minimax delay formulation, it achieves the minimax-optimal worst-case detection delay when solved exactly, while providing near-optimal performance in practical implementations. To enable deployment in large-scale systems, we develop a parallel algorithm leveraging neural unrolling with GPU acceleration, which significantly improves computational efficiency without sacrificing accuracy. We provide theoretical guarantees on detection performance, and extensive experiments on both synthetic and real-world datasets demonstrate the effectiveness, scalability, and robustness of the proposed approach.

\section{Acknowledgments}
 This work was supported in part by the National Natural Science Foundation of China under Grant 12371519 and 61771013; in part by Asiainfo Technologies; in part by the Fundamental Research Funds for the Central Universities of China; and in part by the Fundamental Research Funds of Shanghai Jiading District.

\bibliography{aaai2026}

@article{Lordenminimaxproblem,
  title={Procedures for reacting to a change in distribution},
  author={Lorden, Gary},
  journal={The annals of mathematical statistics},
  pages={1897--1908},
  year={1971},
  publisher={JSTOR}
}

@article{zhangjiangfan,
  title={Low-complexity quickest change detection in linear systems with unknown time-varying pre-and post-change distributions},
  author={Zhang, Jiangfan and Wang, Xiaodong},
  journal={IEEE Transactions on Information Theory},
  volume={67},
  number={3},
  pages={1804--1824},
  year={2021},
  publisher={IEEE}
}

@inproceedings{adaptive_cusum,
  title={Defending false data injection attack on smart grid network using adaptive CUSUM test},
  author={Huang, Yi and Li, Husheng and Campbell, Kristy A and Han, Zhu},
  booktitle={2011 45th Annual Conference on Information Sciences and Systems},
  pages={1--6},
  year={2011},
  organization={IEEE}
}

@article{methodin14,
  title={Quickest detection of false data injection attack in wide-area smart grids},
  author={Li, Shang and Y{\i}lmaz, Yasin and Wang, Xiaodong},
  journal={IEEE Transactions on Smart Grid},
  volume={6},
  number={6},
  pages={2725--2735},
  year={2014},
  publisher={IEEE}
}

@article{matpower,
  title={Matpower 6.0 user’s manual},
  author={Zimmerman, Ray D and Murillo-S{\'a}nchez, Carlos E},
  journal={Power Systems Engineering Research Center},
  volume={9},
  year={2016}
}

@article{QCD_in_cyber_attack,
  title={Real-time detection of false data injection in smart grid networks: An adaptive CUSUM method and analysis},
  author={Huang, Yi and Tang, Jin and Cheng, Yu and Li, Husheng and Campbell, Kristy A and Han, Zhu},
  journal={IEEE Systems Journal},
  volume={10},
  number={2},
  pages={532--543},
  year={2014},
  publisher={IEEE}
}

@article{yang2016deep,
  title={Deep network analyzer (DNA): A big data analytics platform for cellular networks},
  author={Yang, Kai and Liu, Ruilin and Sun, Yanjia and Yang, Jin and Chen, Xin},
  journal={IEEE Internet of Things Journal},
  volume={4},
  number={6},
  pages={2019--2027},
  year={2016},
  publisher={IEEE}
}

@article{dou2019pc,
  title={PC2A: Predicting Collective Contextual Anomalies via LSTM With Deep Generative Model},
  author={Dou, Shaoyu and Yang, Kai and Poor, H Vincent},
  journal={IEEE Internet of Things Journal},
  volume={6},
  number={6},
  pages={9645--9655},
  year={2019},
  publisher={IEEE}
}

@article{yang2014distributed,
  title={Distributed robust optimization ({DRO}), part {I}: {Framework} and example},
  author={Yang, Kai and Huang, Jianwei and Wu, Yihong and Wang, Xiaodong and Chiang, Mung},
  journal={Optimization and Engineering},
  volume={15},
  number={1},
  pages={35--67},
  year={2014},
  publisher={Springer}
}

@article{priceofrobustness,
  title={The price of robustness},
  author={Bertsimas, Dimitris and Sim, Melvyn},
  journal={Operations research},
  volume={52},
  number={1},
  pages={35--53},
  year={2004},
  publisher={Informs}
}

@book{tartakovsky2014sequential,
  title={Sequential analysis: Hypothesis testing and changepoint detection},
  author={Tartakovsky, Alexander and Nikiforov, Igor and Basseville, Michele},
  year={2014},
  publisher={CRC press}
}

@inproceedings{wasserstein,
  title={Minimax robust quickest change detection using wasserstein ambiguity sets},
  author={Xie, Liyan},
  booktitle={2022 IEEE International Symposium on Information Theory (ISIT)},
  pages={1909--1914},
  year={2022},
  organization={IEEE}
}

@article{Misspecified,
  title={Misspecified and asymptotically minimax robust quickest change detection},
  author={Molloy, Timothy L and Ford, Jason J},
  journal={IEEE Transactions on Signal Processing},
  volume={65},
  number={21},
  pages={5730--5742},
  year={2017},
  publisher={IEEE}
}

@article{Minimax,
  title={Minimax robust quickest change detection},
  author={Unnikrishnan, Jayakrishnan and Veeravalli, Venugopal V and Meyn, Sean P},
  journal={IEEE Transactions on Information Theory},
  volume={57},
  number={3},
  pages={1604--1614},
  year={2011},
  publisher={IEEE}
}

@inproceedings{WULR,
  title={Uncertainty-Aware Quickest Change Detection: An Experimental Study},
  author={Hare, James Zachary and Kaplan, Lance},
  booktitle={2022 25th International Conference on Information Fusion (FUSION)},
  pages={1--8},
  year={2022},
  organization={IEEE}
}

@inproceedings{ULR,
  title={Toward uncertainty aware quickest change detection},
  author={Hare, James Zachary and Kaplan, Lance and Veeravalli, Venugopal V},
  booktitle={2021 IEEE 24th International Conference on Information Fusion (FUSION)},
  pages={1--8},
  year={2021},
  organization={IEEE}
}

@article{kosut2011malicious,
  title={Malicious data attacks on the smart grid},
  author={Kosut, Oliver and Jia, Liyan and Thomas, Robert J and Tong, Lang},
  journal={IEEE Transactions on Smart Grid},
  volume={2},
  number={4},
  pages={645--658},
  year={2011},
  publisher={IEEE}
}

@article{cui2012coordinated,
  title={Coordinated data-injection attack and detection in the smart grid: A detailed look at enriching detection solutions},
  author={Cui, Shuguang and Han, Zhu and Kar, Soummya and Kim, Tung T and Poor, H Vincent and Tajer, Ali},
  journal={IEEE Signal Processing Magazine},
  volume={29},
  number={5},
  pages={106--115},
  year={2012},
  publisher={IEEE}
}

@article{error_1,
  title={Imperfect channel-state information in MIMO transmission},
  author={Weber, Tobias and Sklavos, Alexandros and Meurer, Michael},
  journal={IEEE Transactions on Communications},
  volume={54},
  number={3},
  pages={543--552},
  year={2006},
  publisher={IEEE}
}

@book{GCUSUM,
  title={Sequential analysis: Hypothesis testing and changepoint detection},
  author={Tartakovsky, Alexander and Nikiforov, Igor and Basseville, Michele},
  year={2014},
  publisher={CRC Press}
}

@book{MIMO1,
  title={MIMO system technology for wireless communications},
  author={Tsoulos, George},
  year={2018},
  publisher={CRC press}
}

@article{MIMO2,
  title={Survey on Massive MIMO: Technology, Challenges, Opportunities and Benefits},
  author={Vaigandla, Karthik Kumar and Venu, Dr Nookala},
  year={2021},
  publisher={YMER}
}

@article{imperfect_CSI_1,
  title={Secure communication in multiantenna cognitive radio networks with imperfect channel state information},
  author={Pei, Yiyang and Liang, Ying-Chang and Teh, Kah Chan and Li, Kwok Hung},
  journal={IEEE Transactions on Signal Processing},
  volume={59},
  number={4},
  pages={1683--1693},
  year={2011},
  publisher={IEEE}
}

@article{imperfect_CSI_2,
  title={MIMO detection with imperfect channel state information using expectation propagation},
  author={Ghavami, Kamran and Naraghi-Pour, Mort},
  journal={IEEE Transactions on Vehicular Technology},
  volume={66},
  number={9},
  pages={8129--8138},
  year={2017},
  publisher={IEEE}
}

@article{SDP_color,
  title={Approximate graph coloring by semidefinite programming},
  author={Karger, David and Motwani, Rajeev and Sudan, Madhu},
  journal={Journal of the ACM (JACM)},
  volume={45},
  number={2},
  pages={246--265},
  year={1998},
  publisher={ACM New York, NY, USA}
}

@inproceedings{SDp_cut,
  title={An optimal SDP algorithm for Max-Cut, and equally optimal Long Code tests},
  author={O'Donnell, Ryan and Wu, Yi},
  booktitle={Proceedings of the fortieth annual ACM symposium on Theory of computing},
  pages={335--344},
  year={2008}
}

@article{PCF_1,
  title={Perspective cuts for a class of convex 0--1 mixed integer programs},
  author={Frangioni, Antonio and Gentile, Claudio},
  journal={Mathematical Programming},
  volume={106},
  pages={225--236},
  year={2006},
  publisher={Springer}
}

@article{PCF_2,
  title={SDP diagonalizations and perspective cuts for a class of nonseparable MIQP},
  author={Frangioni, Antonio and Gentile, Claudio},
  journal={Operations Research Letters},
  volume={35},
  number={2},
  pages={181--185},
  year={2007},
  publisher={Elsevier}
}

@article{yu2024maskcd,
  title={Maskcd: A remote sensing change detection network based on mask classification},
  author={Yu, Weikang and Zhang, Xiaokang and Das, Samiran and Zhu, Xiao Xiang and Ghamisi, Pedram},
  journal={IEEE Transactions on Geoscience and Remote Sensing},
  volume={62},
  pages={1--16},
  year={2024},
  publisher={IEEE}
}

@article{li2024new,
  title={A new learning paradigm for foundation model-based remote-sensing change detection},
  author={Li, Kaiyu and Cao, Xiangyong and Meng, Deyu},
  journal={IEEE Transactions on Geoscience and Remote Sensing},
  volume={62},
  pages={1--12},
  year={2024},
  publisher={IEEE}
}

@article{zhang2024data,
  title={Data-Driven Quickest Change Detection in (Hidden) Markov Models},
  author={Zhang, Qi and Sun, Zhongchang and Herrera, Luis C and Zou, Shaofeng},
  journal={IEEE Transactions on Signal Processing},
  year={2024},
  publisher={IEEE}
}

@article{gopalan2021bandit,
  title={Bandit quickest changepoint detection},
  author={Gopalan, Aditya and Lakshminarayanan, Braghadeesh and Saligrama, Venkatesh},
  journal={Advances in Neural Information Processing Systems},
  volume={34},
  pages={29064--29073},
  year={2021}
}

@article{li2015cooperative,
  title={Cooperative change detection for voltage quality monitoring in smart grids},
  author={Li, Shang and Wang, Xiaodong},
  journal={IEEE Transactions on Information Forensics and Security},
  volume={11},
  number={1},
  pages={86--99},
  year={2015},
  publisher={IEEE}
}

@article{shi2021algorithm,
  title={Algorithm unrolling for massive access via deep neural networks with theoretical guarantee},
  author={Shi, Yandong and Choi, Hayoung and Shi, Yuanming and Zhou, Yong},
  journal={IEEE Transactions on Wireless Communications},
  volume={21},
  number={2},
  pages={945--959},
  year={2021},
  publisher={IEEE}
}

@inproceedings{chen2025gpu,
  title={GPU-accelerated parallel bilevel optimization for roubst 6g isac},
  author={Chen, Xingdi and Yang, Kai},
  booktitle={Proceedings of the AAAI Conference on Artificial Intelligence},
  volume={39},
  number={11},
  pages={11167--11175},
  year={2025}
}

@article{he2022unsupervised,
  title={An unsupervised deep unrolling framework for constrained optimization problems in wireless networks},
  author={He, Shiwen and Xiong, Shaowen and An, Zhenyu and Zhang, Wei and Huang, Yongming and Zhang, Yaoxue},
  journal={IEEE Transactions on Wireless Communications},
  volume={21},
  number={10},
  pages={8552--8564},
  year={2022},
  publisher={IEEE}
}
\clearpage
\section{Appendix A: Theoretical Analysis}
\subsection{Generalized CUSUM Algorithm}
\label{Ad_A}

In this section, we provide the complete derivation of \eqref{v_t}. Considering the model \cref{eq:model}, we represent the probability density functions of the received signal $\mathbf{x}^{(t)}$ before and after the change as $f_{p}\left(\mathbf{x}^{(t)} \mid \boldsymbol{\theta}^{(t)},\mathbf{H}\right) $ and $f_{q}\left(\mathbf{x}^{(t)} \mid  \boldsymbol{\theta}^{(t)}, \mathbf{H}, \Delta\mathbf{H}^{(t)}\right)$, respectively.
\begin{equation}
\begin{aligned}
   &T_\text{C}=\min \{K\}\\
   &s.t.\max\limits_{1\le k\le K}\sum\limits_{t=k}^K\ln\frac{f_{q}\left(\mathbf{x}^{(t)} \mid  \boldsymbol{\theta}^{(t)}, \mathbf{H}, \Delta\mathbf{H}^{(t)}\right)}{f_{p}\left(\mathbf{x}^{(t)} \mid \boldsymbol{\theta}^{(t)},\mathbf{H}\right)}\ge h,    
\end{aligned}
\end{equation}
where $h$ is the predefined threshold. However, in MIMO systems, $\mathbf{H}$ is uncertain and $\boldsymbol{\theta}^{(t)}$, $\Delta\mathbf{H}^{(t)}$ are unknown, making the CUSUM test impractical. To address this issue, by estimating the unknown $\boldsymbol{\theta}^{(t)}$, $\Delta\mathbf{H}^{(t)}$ with their maximum likelihood estimates (MLE) \cite{GCUSUM}, we can obtain (\ref{GCUSUM}) and (\ref{test statistic}). Define 
\begin{equation}
\begin{aligned}
\delta_{k}^{(K)} \triangleq
\ln \frac{\sup \limits_{\boldsymbol{\theta}^{(t)}, \Delta\mathbf{H}^{(t)}} \prod_{t=k}^{K} f_{q}\left(\mathbf{x}^{(t)} \mid  \boldsymbol{\theta}^{(t)}, \mathbf{H}, \Delta\mathbf{H}^{(t)}\right)}
{\sup \limits_{\boldsymbol{\theta}^{(t)}} 
\prod_{t=k}^{K} 
f_{p}\left(\mathbf{x}^{(t)} \mid \boldsymbol{\theta}^{(t)},\mathbf{H}\right)
}.
\\
s.t. \quad \left\{\begin{array}{lc}
    \Delta\mathbf{H}^{(t)}:\left\{\rho_{L} \leq\left|\mu_{m}^{(t)}\right| \leq \rho_{U}\right\},m \in \mathcal{U}^{(t)}\\
    \bm{\mu}^{(t)}=\{\mu_m^{(t)}\}\in \mathcal{C}^{\perp}(\mathbf{H})
    \end{array}\right.
\end{aligned}
\end{equation}
We can further obtain that
\begin{equation}
\begin{aligned}
\delta_{k}^{(K)} =  &
\sum_{t=k}^{K}   \bigg\{ \sup \limits_{\boldsymbol{\theta}^{(t)}, \Delta\mathbf{H}^{(t)}}
f_{q}\left(\mathbf{x}^{(t)} \mid  \boldsymbol{\theta}^{(t)}, \mathbf{H}, \Delta\mathbf{H}^{(t)}\right)
\nonumber  \\ &\qquad \quad  -\sup \limits_{\boldsymbol{\theta}^{(t)}}  \ln 
f_{p}\left(\mathbf{x}^{(t)} \mid \boldsymbol{\theta}^{(t)},\mathbf{H}\right) 
\bigg\} \\
  \triangleq & \sum_{t=k}^{K} \delta_{k,t}^{(K)}.\\
s.t. \quad &\left\{\begin{array}{lc}
    \Delta\mathbf{H}^{(t)}:\left\{\rho_{L} \leq\left|\mu_{m}^{(t)}\right| \leq \rho_{U}\right\},m \in \mathcal{U}^{(t)}\\
    \bm{\mu}^{(t)}=\{\mu_m^{(t)}\}\in \mathcal{C}^{\perp}(\mathbf{H})
    \end{array}\right.
\label{A.3}
\end{aligned}
\end{equation}

Given the model \eqref{eq:model} and the Gaussian distribution of $\mathbf{n}^{(t)}$, we have that
\begin{equation}
\begin{aligned}
&f_{p}\left(\mathbf{x}^{(t)} \mid \boldsymbol{\theta}^{(t)},\mathbf{H}\right) \\
 =& \frac{\exp\bigg[-\frac{\left(\mathbf{x}^{(t)}-\mathbf{H}\boldsymbol{\theta}^{(t)}\right)^T\left(\mathbf{x}^{(t)}-\mathbf{H}\boldsymbol{\theta}^{(t)}\right)}{2\sigma_n^2}   \Bigg]}{\left(2\pi\sigma_n^2\right)^{\frac{M}{2}}},
\label{A.4}
\end{aligned}
\end{equation}
\begin{equation}
\begin{aligned}
& f_{q}\left(\mathbf{x}^{(t)} \mid  \boldsymbol{\theta}^{(t)}, \mathbf{H},  \Delta\mathbf{H}^{(t)}\right) \\
=& \frac{\exp\bigg[-\frac{\left(\mathbf{x}^{(t)}-\mathbf{H}\boldsymbol{\theta}^{(t)}-\Delta\mathbf{H}^{(t)}\boldsymbol{\theta}^{(t)}\right)^T\left(\mathbf{x}^{(t)}-\mathbf{H}\boldsymbol{\theta}^{(t)}-\Delta\mathbf{H}^{(t)}\boldsymbol{\theta}^{(t)}\right)}{2\sigma_n^2} \Bigg]}{\left(2\pi\sigma_n^2\right)^{\frac{M}{2}}}.
\label{A.5}
\end{aligned}
\end{equation}
Let $\tilde{\mathbf{x}}^{(t)}$ represents the component of $\mathbf{x}^{(t)}$ orthogonal to the column space of $\mathbf{H}$
i.e., 
\begin{equation}
\begin{aligned}
    \tilde{\mathbf{x}}^{(t)}=\mathbf{P}_{\mathbf{H}}^{\perp} \mathbf{x}^{(t)}
\label{comple_space_H}
\end{aligned}
\end{equation}
It then follows from (\ref{A.4}) and (\ref{A.5}) that,
\begin{equation}
\begin{aligned}
& \sup \limits_{\boldsymbol{\theta}^{(t)}}  \ln 
f_{p}\left(\mathbf{x}^{(t)} \mid \boldsymbol{\theta}^{(t)},\mathbf{H}\right) 
= -\frac{\|\tilde{\mathbf{x}}^{(t)}\|^{2}}{2\sigma_n^{2}} -  \frac{M\ln( 2\pi\sigma_n^2)}{2} ,
\label{A.6} \\
\end{aligned}
\end{equation}

\begin{equation}
    \begin{aligned}
        & \sup \limits_{\boldsymbol{\theta}^{(t)}, \Delta\mathbf{H}^{(t)}
} \ln f_{q}\left(\mathbf{x}^{(t)} \mid  \boldsymbol{\theta}^{(t)}, \mathbf{H}, \Delta\mathbf{H}^{(t)}\right)\\
=& -  \frac{M\ln( 2\pi\sigma_n^2)}{2}    
+\sup\limits_{
{\boldsymbol{\mu}^{(t)}}
} \Bigg[ -\frac{\left(\tilde{\mathbf{x}}^{(t)} - \boldsymbol{\mu}^{(t)}\right)^T\left(\tilde{\mathbf{x}}^{(t)}-\boldsymbol{\mu}^{(t)}\right)}{2\sigma_n^2} 
\Biggr] 
\label{A.7}
    \end{aligned}
\end{equation}
By combining (\ref{A.3}) and (\ref{A.6}) with (\ref{A.7}), we have that
\begin{equation}
  \begin{aligned}
\delta_{k,t}^{(K)}
& =  \sup\limits_{\boldsymbol{\mu}^{(t)}}    \frac{1}{2\sigma_n^2} \left\{
2(\boldsymbol{\mu}^{(t)})^{T}\tilde{\mathbf{x}}^{(t)} - \| \boldsymbol{\mu}^{(t)}\|_{2}^{2} 
\right\}  .
\label{A.8}
\end{aligned}  
\end{equation}
Thus, by employing (\ref{A.8}), the decision statistic $V_K$ can be rewritten as
\begin{equation}
\begin{aligned}
V_{K}& \stackrel{\Delta}{=}\max_{1\leq k\leq K}\sup_{\{\mathcal{U}^{(t)}\}, \mathbf{H}\in \mathcal{S}  }\delta_{k}^{(K)} =\max_{1\leq k\leq K}\sup_{\{\mathcal{U}^{(t)}\},\mathbf{H}\in \mathcal{S}  }\sum_{t=k}^{K}\delta_{k,t}^{(K)} \\
&=\max_{1\leq k\leq K}\sum_{t=k}^{K}v_{t},
\end{aligned}
\end{equation}
where
\begin{equation}
\begin{aligned}
    &v_{t} = \sup _{\mathcal{U}^{(t)}}   \sup_{ 
    \substack{ \boldsymbol{\mu}^{(t)}
    } } \frac{1}{2 \sigma^{2}} \times  \left\{  2\left(\boldsymbol{\mu}^{(t)}\right)^{T} \tilde{\mathbf{x}}^{(t)}-\left\|\boldsymbol{\mu}^{(t)}\right\|_{2}^{2} \right\} ,\\
    &s.t. \quad \left\{\begin{array}{lc}
    \Delta\mathbf{H}^{(t)}:\left\{\rho_{L} \leq\left|\mu_{m}^{(t)}\right| \leq \rho_{U}\right\},m \in \mathcal{U}^{(t)}\\
    \bm{\mu}^{(t)}=\{\mu_m^{(t)}\}\in \mathcal{C}^{\perp}(\mathbf{H})
    \end{array}\right.
\end{aligned}
\label{ap_vt}
\end{equation}
Since $\tilde{\mathbf{x}}^{(t)}$ is orthogonal to the column space of $\mathbf{H}$ (\ref{ap_vt}) is equivalent to (\ref{v_t})

\subsection{Derivation of Reformulation \eqref{PC_formulation}}
\paragraph{Reformulation via Strong Duality:}

Consider the robust constraint under a polyhedral uncertainty set:
\begin{equation}
\bar{\mathbf{h}}_i^T \boldsymbol{\mu}_i + \max_{\mathbf{h}_i \in \mathcal{S}_i} (\mathbf{h}_i - \bar{\mathbf{h}}_i)^T \boldsymbol{\mu}_i \leq \varepsilon_i, \\
\mathcal{S}_i = \left\{ \mathbf{h}_i \mid \mathbf{D}_i \mathbf{h}_i \leq \mathbf{d}_i \right\}.
\label{eq:original_constraint}
\end{equation}
The inner maximization problem in \eqref{eq:original_constraint} can be rewritten as
\begin{equation}
\max_{\mathbf{h}_i \in \mathcal{S}_i} (\mathbf{h}_i - \bar{\mathbf{h}}_i)^T \boldsymbol{\mu}_i = \max_{\mathbf{h}_i \in \mathcal{S}_i} \mathbf{h}_i^T \boldsymbol{\mu}_i - \bar{\mathbf{h}}_i^T \boldsymbol{\mu}_i.
\label{eq:inner_max}
\end{equation}
Substituting \eqref{eq:inner_max} into \eqref{eq:original_constraint}, the robust constraint becomes
\begin{equation}
\label{eq:max_problem}
\begin{aligned}
  \max_{\mathbf{h}_i }&\quad \mathbf{h}_i^T \boldsymbol{\mu}_i \leq \varepsilon_i\\
\text{s.t.}& \quad \mathbf{D}_i \mathbf{h}_i \leq \mathbf{d}_i
\end{aligned}
\end{equation}
Using strong duality, the maximization problem \eqref{eq:max_problem} can be equivalently reformulated as the following dual minimization problem:
\begin{equation}
\label{eq:max_problem}
\begin{aligned}
  \min_{\mathbf{p}_i\geq \bm{0}}&\quad \mathbf{p}_i^T \mathbf{d}_i \\
\text{s.t.}& \quad \mathbf{D}_i^T \mathbf{p}_i = \boldsymbol{\mu}_i.
\end{aligned}
\end{equation}
Therefore, the robust constraint \eqref{eq:original_constraint} is equivalent to
\begin{equation}
\min_{\mathbf{p}_i}\mathbf{p}_i^T \mathbf{d}_i \leq \varepsilon_i, \quad \mathbf{D}_i^T \mathbf{p}_i = \boldsymbol{\mu}_i, \quad \mathbf{p}_i \geq \bm{0}.
\label{eq:reformulated_constraint}
\end{equation}
Since the objective is to minimize $\boldsymbol{\mu}_i$, and $\boldsymbol{p}_i$ is linearly related to $\boldsymbol{\mu}_i$ through the constraint $\mathbf{D}_i^T \boldsymbol{p}_i = \boldsymbol{\mu}_i$, the minimization over $\boldsymbol{p}_i$ in constraint \eqref{eq:reformulated_constraint} is implicitly handled by the objective. Therefore, the min operator in the constraint can be removed, and it suffices to enforce $\boldsymbol{p}_i^T \mathbf{d}_i \leq \varepsilon_i$ directly.

\paragraph{Reformulation with Epigraph:}
Consider the one-dimensianal case of MIQP objective function as:
\begin{equation}
    \frac{1}{2 \sigma^{2}}  \left\{ \left\|(\mu_m^{+} - \mu_m^{-})\right\|_{2}^{2} -2 {(\mu_m^{+} - \mu_m^{-}) }^{T}{x_m} \right\}.
\end{equation}
We consider reformulating the nonlinear term in the above expression using the epigraph approach, transforming it into a linear term and obtaining an equivalent convex cone constraint. Specifically, we first introduce \( u_m \) to represent the mutual exclusivity between \(\mu_m^+\) and \(\mu_m^-\). Then, by introducing the variable \(\phi_m\) satisfying
$
u_m^{-1} (\mu_m^+ - \mu_m^-)^2 \leq \phi_m,
$
to replace the nonlinear term in the objective function.

Here, we prove that the inequality $ u_m^{-1} (\mu_m^+ - \mu_m^-)^2\leq \phi_m$ is equivalent to the second-order cone constraint
\begin{equation}
    \begin{Vmatrix} {\mu}_m^{+} - {\mu}_m^{-} \\\frac{\phi_m-u_m}{2}\end{Vmatrix} \leq           \frac{\phi_m+u_m}{2}.
    \label{eq:soc}
\end{equation}
Expanding the left-hand side of the \eqref{eq:soc} yields
\begin{equation}
\left\|
\begin{bmatrix}
\mu_m^+ - \mu_m^- \\
\frac{\phi_m - u_m}{2}
\end{bmatrix}
\right\|
= \sqrt{ (\mu_m^+ - \mu_m^-)^2 + \left( \frac{\phi_m - u_m}{2} \right)^2 }.
\end{equation}
Thus, by simultaneously squaring both sides of \eqref{eq:soc}, it follows
\begin{equation}
    (\mu_m^+ - \mu_m^-)^2 + \left( \frac{\phi_m - u_m}{2} \right)^2\leq\left( \frac{\phi_m + u_m}{2} \right)^2.
\end{equation}
By further eliminating terms on both sides of the inequality and dividing by $u_m$ we obtain 
\begin{equation}
    \frac{(\mu_m^+ - \mu_m^-)^2}{u_m} \leq \phi_m.
\end{equation}

\subsection{Relaxation (\ref{PC_formulation}) versus SDP relaxation}
\label{app-B}

In this section, we present a proof demonstrating that the relaxation (\ref{PC_formulation}) offers a tighter bound compared to the traditional SDP relaxation.
We introduce auxiliary matrices $\mathbf{U}$ and $\mathbf{B}$ for the integer variables $\mathbf{u}$ and $\mathbf{b}$ in problem (\ref{MIQP}), respectively. Since both $\mathbf{u}$ and $\mathbf{b}$ are 0-1 integer vectors, we have that, 
\begin{equation}
\begin{aligned}
&\mathbf{U} = \mathbf{u}\mathbf{u}^{T}  \\
&\mathbf{U}_{mm} = u_m, \forall m.
\end{aligned}
\end{equation}
The SDP relaxtion is obtained by replacing the nonconvex equality constraint  $\mathbf{U} = \mathbf{u}\mathbf{u}^{T}$ with a positive semi-definite constraint $\mathbf{U} - \mathbf{u}\mathbf{u}^{T} \succeq 0$. This constraint can be formulated as follows by employing Schur complement.
\begin{equation}
\begin{bmatrix}U&u\\u^T&1\end{bmatrix}\succeq0
\end{equation}
The previous procedure applies equally to variable $\mathbf{b}$.
According to \cite{}, we have that,
\begin{equation}
\begin{cases}
U_{mm}=u_m,m=1,\ldots,n\\\begin{bmatrix} \mathbf{U} 
& \mathbf{u}\\
\mathbf{u}^T &  1  \end{bmatrix}\succeq 0
\end{cases}  \iff 0 \leq u_m \leq 1, \forall m.
\end{equation}
Since the matrices $\mathbf{U}$ and $\mathbf{B}$ are not involved in any other constraints or the objective function of (\ref{PC_formulation}), the SDP relaxation is equivalent to simply relaxing $\mathbf{u}$ and $\mathbf{b}$ to $[0,1]^m$.
Recall from the Problem Relexation section that relaxation (\ref{PC_formulation}) is attained by minimizing the convex envelope of the function $g$ across all feasible points. As a result, it provides tighter bounds compared to the SDP relaxation.

\subsection{Proof of \cref{theorem_2}}
\label{Poof_2}
In this section, we provide the complete proof of \cref{theorem_2}.
\begin{proof}
Due to the constraint $\boldsymbol{\mu}^{(t)} \in \mathcal{C}^{\perp}(\mathbf{H})$, 
problem \eqref{v_t}) is equivalent to 
\begin{equation}
\label{new_v_t}
\begin{split}
    v_{t} =- \min _{\mathcal{U}^{(t)}}  & \min_{\boldsymbol{\mu}^{(t)}:\left\{\rho_{L} \leq\left|\mu_{m}^{(t)}\right| \leq \rho_{U}\right\}_{m \in \mathcal{U}^{(t)}}, \boldsymbol{\mu}^{(t)} \in \mathcal{C}^{\perp}(\mathbf{H})} k(\boldsymbol{\mu}^{(t)}), 
\end{split}
\end{equation}
where $k(\boldsymbol{\mu}^{(t)}) = \frac{1}{2 \sigma^{2}} 
\big\{   \big\|\boldsymbol{\mu}^{(t)}\big\|_{2}^{2} - 2\left(\boldsymbol{\mu}^{(t)}\right)^{T} \tilde{\boldsymbol{x}}^{(t)}
\big\}$.
Thus, we have the upper bound of $v_{t}$ as follows,
\begin{equation}
\label{C-1}
 v_{t} \leq  {\frac{1}{2 \sigma^{2}} }||{\tilde{\boldsymbol{x}}^ {(t)} ||}_{2}^2.
\end{equation}
By combining (\ref{V_K}) with (\ref{C-1}), the upper bound of $\mathbb{E}_{\infty}\left\{V_{\Gamma_R}\right\}$ can be obtained as follows,
\begin{equation}
\begin{aligned}
\label{C-2}
\mathbb{E}_{\infty}\left\{V_{\Gamma_R}\right\} 
& = \mathbb{E}_{\infty}\left\{\max _{1 \leq k \leq \Gamma_R} \sum_{t=k}^{\Gamma_R} v_{t}\right\}    \\
& \leq  \mathbb{E}_{\infty}\left\{\max _{1 \leq k \leq \Gamma_R} \sum_{t=k}^{\Gamma_R} \frac{1}{2 \sigma^{2}}{||\tilde{\boldsymbol{x}}^ {(t)} ||}_{2}^2  \right\}   \\
&  \leq \mathbb{E}_{\infty}\left\{\sum_{t=1}^{\Gamma_R} {\frac{1}{2 \sigma^{2}}} 
 {||\tilde{\boldsymbol{x}}^ {(t)} ||}_{2}^2     \right\} \\
& =  \mathbb{E}_{\infty}\left\{\sum_{t=1}^{\infty} {\frac{1}{2 \sigma^{2}} }||{\tilde{\boldsymbol{x}}^ {(t)} ||}_{2}^2  ~ \mathbb{I}\left\{\Gamma_R \geq t\right\}\right\}.
\end{aligned}
\end{equation}
According to the monotone convergence theorem and the independence between  $|{|\tilde{\boldsymbol{x}}^ {(t)} ||}^2$ and $\mathbb{I} \{\Gamma_R \geq t\}$, we have that,
\begin{equation}
\begin{aligned}
\mathbb{E}_{\infty}\left\{V_{\Gamma_R}\right\} 
& \leq \sum_{t=1}^{\infty} \mathbb{E}_{\infty}\left\{ \frac{1}{2 \sigma^{2}}||{\tilde{\boldsymbol{x}}^ {(t)} ||}_{2}^2   ~ \mathbb{I}\left\{\Gamma_R \geq t\right\}\right\}  \\
& = \sum_{t=1}^{\infty}  \mathbb{E}_{\infty}\left\{ {\frac{1}{2 \sigma^{2}}||\tilde{\boldsymbol{x}}^ {(t)} ||}_{2}^2  \right\} \mathbb{E}_{\infty}\left\{\mathbb{I}\left\{\Gamma_R \geq t\right\}\right\}  \\
& = \sum_{t=1}^{\infty} \mathbb{E}_{\infty}\left\{\frac{1}{2 \sigma^{2}}|| {\tilde{\boldsymbol{x}}^ {(t)} || }_{2}^2  \right\} \mathbb{P}_{\infty}\left(\Gamma_R \geq t\right), \label{C-5}
\end{aligned}
\end{equation}
where $\mathbb{P}_{\infty}$ is the probability measure when no change occurs. Since $\| \mathbf{x}^{(t)} \|_2^2 \leq \alpha$ and $\tilde{\boldsymbol{x}}^ {(t)} $ is a component of $\mathbf{x}^{(t)} $, we can further obtain that, 
\begin{align}
\label{C-6}
\mathbb{E}_{\infty}\left\{V_{\Gamma_R}\right\} 
\leq & \frac{\alpha}{2\sigma^2} \sum_{t=1}^{\infty}\mathbb{P}_{\infty}\left(\Gamma_R \geq t\right) \nonumber  \\
= &\frac{\alpha}{2\sigma^2}  \mathbb{E}_{\infty}\{  \Gamma_R \}.
\end{align}
When the false alarm is declared, we have $V^{(\Gamma_R)} \geq h$, which implies
\begin{equation}
\label{C-7}
\mathbb{E}_{\infty}\left\{V^{\left(\Gamma_R\right)}\right\} \geq h,
\end{equation}
and therefore by employing (\ref{C-6}), we can obtain that, 
\begin{equation}
\label{C-8}
    \mathbb{E}_{\infty}\{  \Gamma_R \} \geq \frac{2h\sigma^2}{\alpha}.
\end{equation}
Thus, if
\begin{equation}
\label{C-9}
h \geq \frac{\alpha}{2\sigma^2}\gamma,
\end{equation}
then $\mathbb{E}_{\infty}\{  \Gamma_R \} \geq \gamma$ is guaranteed, which completes the
proof.
\end{proof}

\subsection{Proof of \cref{theorem_3}}
\label{Poof_3}
In this section, we provide the complete proof of \cref{theorem_3}.
In order to prove \cref{theorem_3}, We will start by introducing two lemmas. \cref{lemma1} provides an upper bound on $\mathbb{E}_{t_a}\left\{\left.(\Gamma_R-t_a+1)^+\right|\mathcal{F}_{t_a-1}\right\}$. And \cref{lemma2} specifies that the stopping time $\Gamma_R$ utilized in SDPCUSUM and BBCUSUM achieves the equalizer rule. On the basis of these two lemmas, we finally provide the proof of \cref{theorem_3}.

\begin{lemma}
\label{lemma1}
For any given $t_a$, we have that, 
\begin{equation}
\begin{aligned}
&\mathbb{E}_{t_{a}}\bigg\{(\Gamma_R-t_{a}+1)^{+}  \bigg|\mathcal{F}_{t_{a}-1} \bigg\}\\
\leq& \frac{2\sigma^2}{{\rho_L}^2}  \mathbb{E}_{t_a} \bigg
\{V_{\Gamma_R}\mathds{1}\left\{\Gamma_R\geq t_a\right\}\bigg|\mathcal{F}_{t_a-1} \bigg\}.
\end{aligned}
\end{equation}
\end{lemma}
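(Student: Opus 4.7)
My plan is to combine a pathwise lower bound on $V_{\Gamma_R}$ in terms of the per-step increments $v_t$ with a uniform post-change conditional lower bound on $\mathbb{E}_{t_a}[v_t\mid\mathcal{F}_{t-1}]$, and then extract the expected detection delay by an optional-stopping / Fubini argument.

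First, unrolling the CUSUM recursion \eqref{V_K} gives the identity $V_K=\max_{1\le k\le K}\sum_{t=k}^{K}v_t$. Plugging in the admissible choice $k=t_a$ inside the outer maximum on the event $\{\Gamma_R\ge t_a\}$ yields the pathwise bound
\[
V_{\Gamma_R}\,\mathds{1}\{\Gamma_R\ge t_a\} \;\ge\; \sum_{t\ge t_a} v_t\,\mathds{1}\{\Gamma_R\ge t\}.
\]
Second, I would establish the conditional lower bound $\mathbb{E}_{t_a}[v_t\mid\mathcal{F}_{t-1}]\ge \rho_L^2/(2\sigma^2)$ for every $t\ge t_a$. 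Let $\boldsymbol{\mu}^{(t)}_{\star}\triangleq\mathbf{P}_{\mathbf{H}}^{\perp}\mathbf{a}^{(t)}$ be the genuine orthogonal projection from \eqref{3-4}. Under the Theorem~3 tolerance condition $\varepsilon_i\ge d_i\rho_H M^{1/2}$, the true triple $(\mathbf{H},\mathcal{U}^{(t)},\boldsymbol{\mu}^{(t)}_{\star})$ is feasible in the supremum defining \eqref{v_t}, so $v_t\ge\tfrac{1}{2\sigma^2}\{2(\boldsymbol{\mu}^{(t)}_{\star})^{\!\top}\mathbf{x}^{(t)}-\|\boldsymbol{\mu}^{(t)}_{\star}\|_2^2\}$. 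Taking $\mathbb{E}_{t_a}[\cdot\mid\mathcal{F}_{t-1}]$ and using $\mathbb{E}_{t_a}[\mathbf{x}^{(t)}\mid\mathcal{F}_{t-1}]=(\mathbf{H}+\Delta\mathbf{H}^{(t)})\boldsymbol{\theta}^{(t)}$ together with the orthogonality $(\boldsymbol{\mu}^{(t)}_{\star})^{\!\top}\mathbf{H}=\mathbf{0}$ and the identity $(\boldsymbol{\mu}^{(t)}_{\star})^{\!\top}\Delta\mathbf{H}^{(t)}\boldsymbol{\theta}^{(t)}=\|\boldsymbol{\mu}^{(t)}_{\star}\|_2^2$ (which both follow from \eqref{3-4}) collapses the right-hand side to $\|\boldsymbol{\mu}^{(t)}_{\star}\|_2^2/(2\sigma^2)$, which is at least $\rho_L^2/(2\sigma^2)$ by the magnitude constraint on the active support.

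Third, I would take $\mathbb{E}_{t_a}[\cdot\mid\mathcal{F}_{t_a-1}]$ of the pathwise bound. Because $\{\Gamma_R\ge t\}=\{\Gamma_R\le t-1\}^{c}\in\mathcal{F}_{t-1}$, the tower property combined with the step-$t$ lower bound yields
\[
\mathbb{E}_{t_a}\bigl[v_t\mathds{1}\{\Gamma_R\ge t\}\,\big|\,\mathcal{F}_{t_a-1}\bigr]\;\ge\;\frac{\rho_L^2}{2\sigma^2}\,\mathbb{E}_{t_a}\bigl[\mathds{1}\{\Gamma_R\ge t\}\,\big|\,\mathcal{F}_{t_a-1}\bigr].
\]
Summing over $t\ge t_a$ and using $\sum_{t\ge t_a}\mathds{1}\{\Gamma_R\ge t\}=(\Gamma_R-t_a+1)^{+}$ delivers the claim after multiplication by $2\sigma^2/\rho_L^2$.

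The main obstacle is justifying the interchange of the infinite sum with the conditional expectation on the right-hand side of the pathwise inequality, since $v_t$ is not pointwise nonnegative and $\Gamma_R$ may be unbounded. I would resolve this by first working with the truncation $\Gamma_R\wedge T$, where the sum is finite and the swap is automatic by linearity, applying the step-$t$ lower bound, and then sending $T\to\infty$: monotone convergence takes care of $\mathbb{E}_{t_a}[(\Gamma_R\wedge T - t_a+1)^{+}\mid\mathcal{F}_{t_a-1}]$, while dominated convergence on the truncated $V$ is available via the upper bound $v_t\le\|\tilde{\mathbf{x}}^{(t)}\|_2^2/(2\sigma^2)$ already used in \cref{theorem_2}, ensuring the left-hand side is integrable under the Gaussian second-moment structure of $\tilde{\mathbf{x}}^{(t)}$.
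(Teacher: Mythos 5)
Your proposal is correct and follows essentially the same route as the paper's proof: feasibility of the true $\boldsymbol{\mu}^{(t)}_{\star}$ under $\varepsilon_i\ge d_i\rho_H M^{1/2}$ gives the per-step conditional lower bound $\rho_L^2/(2\sigma^2)$, the tower property with $\{\Gamma_R\ge t\}\in\mathcal{F}_{t-1}$ converts the sum of increments into the expected delay, and the pathwise bound $V_{\Gamma_R}\mathds{1}\{\Gamma_R\ge t_a\}\ge\sum_{t\ge t_a}v_t\mathds{1}\{\Gamma_R\ge t\}$ closes the argument. Your concern about interchanging the sum and conditional expectation is moot, since the empty support $\mathcal{U}^{(t)}=\emptyset$ is admissible in \eqref{v_t}, so $v_t\ge 0$ pointwise and monotone convergence applies directly, which is exactly what the paper does.
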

\begin{proof}
Let $\boldsymbol{h}_{t_i}$ represents the true value of $\boldsymbol{h}_{i}$.
Since $d_i$ denotes the diameter of uncertainty set and $\rho_H$ is the upper bound of every element of $ \boldsymbol{\mu}^{(t)}$, we have that, 
\begin{equation}
(\boldsymbol{h}_i^T - \boldsymbol{h}_{t_i}^T) \boldsymbol{\mu}^{(t)} \leq 
\| \boldsymbol{h}_i^T - \boldsymbol{h}_{t_i}^T\| \| \boldsymbol{\mu}^{(t)} \| \leq 
d_i \rho_H M^{\frac12}, \forall \boldsymbol{h}_{i} \in \mathcal{U}_{i}
\end{equation}
Since $\varepsilon_{i} \geq d_i \rho_H M^{\frac12}$, we can further obtain that, 
\begin{equation}
-\varepsilon_{i} \leq (\boldsymbol{h}_i^T - \boldsymbol{h}_{t_i}^T) \boldsymbol{\mu}^{(t)} \leq \varepsilon_{i}, \forall \boldsymbol{h}_{i} \in \mathcal{U}_{i}.
\end{equation}
Since $ \boldsymbol{h}_{t_i}^T \boldsymbol{\mu}^{(t)} =0$, we have that,
\begin{equation}
-\varepsilon_{i} \leq \boldsymbol{h}_i^T \boldsymbol{\mu}^{(t)} \leq \varepsilon_{i}, \forall \boldsymbol{h}_{i} \in \mathcal{U}_{i}
\label{ro_1}
\end{equation}
which indicates that every $ \boldsymbol{\mu}^{(t)}$ satisfies $ \boldsymbol{h}_{t_i}^T \boldsymbol{\mu}^{(t)} =0$ also satisfies (\ref{ro_1}).

It can be seen from \eqref{eq:model} and (\ref{comple_space_H}) that 
\begin{equation}
\mathbb{E}_{t_{a}} \{ 
\tilde{\boldsymbol{x}}^{(t)}    \}= \boldsymbol{\mu}^{(t)}.
\end{equation}
By employing  (\ref{new_v_t}), when $t \geq t_a$,  we have that,
\begin{equation}
\mathbb{E}_{t_{a}}\left\{ v_t\right\}  \geq \mathbb{E}_{t_{a}} \{ -f( \boldsymbol{\mu}^{(t)}   )  \} =  \frac{1}{2\sigma^2} \| \boldsymbol{\mu}^{(t)} \|_2^2 \geq \frac{ {\rho_L}^2 }{2\sigma^2}.
\label{ne_4}
\end{equation}
Since $\mathds{1} \{\Gamma_R\geq t_{a}\}$ is $\mathcal{F}_{t_a-1}$-measurable, by applying the monotone convergence theorem, we have 
\begin{equation}
\begin{aligned}
& \mathbb{E}_{t_a} \bigg\{\mathds{1} \{\Gamma_R\geq t_a \}\sum_{t=t_a}^{\Gamma_R}v_t \bigg|\mathcal{F}_{t_a-1}\bigg\}  \\
& =   \mathds{1} \{\Gamma_R\geq t_{a}\} \times\mathbb{E}_{t_{a}} \bigg \{\sum_{t=t_{a}}^{\infty}v_t \mathds{1}\{\Gamma_R\geq t \}\bigg|\mathcal{F}_{t_{a}-1}\bigg\} \\
&=\mathds{1} \{\Gamma_R\geq t_{a}\} \times\sum_{t=t_{a}}^{\infty}\mathbb{E}_{t_{a}}\bigg\{ v_t \mathds{1} \{\Gamma_R\geq t \} \bigg| \mathcal{F}_{t_{a}-1} \bigg \}.  \label{d6}
\end{aligned}
\end{equation}
And we further have  
\begin{align}
& \mathds{1} \{\Gamma_R\geq t_{a}\} \times\sum_{t=t_{a}}^{\infty}\mathbb{E}_{t_{a}}\bigg\{ v_t \mathds{1} \{\Gamma_R\geq t \} \bigg| \mathcal{F}_{t_{a}-1} \bigg \} \nonumber \\
& = \mathds{1} \{\Gamma_R\geq t_{a}\} \times\sum_{t=t_a}^\infty\mathbb{E}_{t_a} \bigg\{\mathbb{E}_{t_a} \bigg\{v_t \mathds{1} \{\Gamma_R\geq t \} \bigg| \mathcal{F}_{t-1} \bigg\}\bigg|\mathcal{F}_{t_a-1} \bigg\}  \label{d7}\\
& = \mathds{1} \{\Gamma_R\geq t_{a}\} \times  \sum_{t=t_a}^\infty\mathbb{E}_{t_a} \bigg\{\mathds{1} \{\Gamma_R\geq t \} \times  \mathbb{E}_{t_a}\big\{v_t\big\} \bigg|\mathcal{F}_{t_a-1}\bigg\}, \label{d8}
\end{align}
where (\ref{d7}) is obtained by employing the Tower’s property and $\mathcal{F}_{t_a-1}\subseteq\mathcal{F}_{t-1}$ when $t \geq t_a$. (\ref{d8}) comes from that fact that $\mathds{1} \{\Gamma_R\geq t_{a}\}$ is $\mathcal{F}_{t-1}$-measurable.
By combining (\ref{d6}) and (\ref{d8}) with (\ref{ne_4}), we have
\begin{equation}
\begin{aligned}
& \mathbb{E}_{t_a} \bigg\{ \mathds{1} \{\Gamma_R\geq t_a \}\sum_{t=t_a}^{\Gamma_R} v_t \bigg|\mathcal{F}_{t_a-1} \bigg \} \\
 \geq & \frac{ {\rho_L}^2 }{2\sigma^2} 
\times \mathds{1} \{\Gamma_R\geq t_{a} \}\sum_{t=t_{a}}^{\infty}\mathbb{E}_{t_{a}} \bigg\{ \mathds{1} \{\Gamma_R\geq t \} \bigg|\mathcal{F}_{t_{a}-1} \bigg\}.
\label{d9}  
\end{aligned}
\end{equation}

Since  $\mathds{1} \{\Gamma_R\geq t_a \}$ is $\mathcal{F}_{t_a -1}$-measurable, by employing the monotone convergence theorem, we can obtain
\begin{equation}
\begin{aligned}
&  \mathds{1} \{\Gamma_R\geq t_{a} \}\sum_{t=t_{a}}^{\infty}\mathbb{E}_{t_{a}} \bigg\{  \mathds{1} \{\Gamma_R\geq t \} \bigg|\mathcal{F}_{t_{a}-1} \bigg\} \\
&  = \mathbb{E}_{t_{a}}  \bigg\{  \mathds{1} \{\Gamma_R\geq t \} \sum_{t=1}^{\infty} \mathds{1} \{ (\Gamma_R - t_a + 1) \geq t \}
\bigg|\mathcal{F}_{t_{a}-1} \bigg\} \\
& =  \mathbb{E}_{t_a}  \bigg\{ (\Gamma_R-t_a+1) \mathds{1}\{\Gamma_R\geq t_a \} \bigg|\mathcal{F}_{t_a-1} \bigg\}   \\
&  =  \mathbb{E}_{t_{a}} \bigg\{ (\Gamma_R-t_{a}+1)^{+} \bigg|\mathcal{F}_{t_{a}-1} \bigg\}. 
\label{d11} 
\end{aligned}
\end{equation}
By combining (\ref{d9}) with (\ref{d11}), we have 
\begin{equation}
\begin{aligned}
&\mathbb{E}_{t_a} \bigg\{ \mathds{1} \{\Gamma_R\geq t_a \}\sum_{t=t_a}^{\Gamma_R} v_t \bigg|\mathcal{F}_{t_a-1} \bigg \} \\
\geq & \frac{ {\rho_L}^2 }{2\sigma^2} 
\times \mathbb{E}_{t_{a}} \bigg\{ (\Gamma_R-t_{a}+1)^{+} \bigg|\mathcal{F}_{t_{a}-1} \bigg\}.
\label{nd1}
\end{aligned}
\end{equation}
Since $v_t \geq 0 $ according to \eqref{v_t}, by employing (\ref{V_K}), we have
\begin{equation}
\begin{aligned}
&V_{\Gamma_R}\mathds{1}\left\{\Gamma_R\geq t_{a}\right\}\\
 =&\mathds{1}\left\{\Gamma_R\geq t_a\right\}\sum_{t=1}^{\Gamma_R}v_t \\
=&\mathds{1}\left\{\Gamma_R\geq t_a\right\}\sum_{t=1}^{t_a-1}v_t+ \mathds{1}\{\Gamma_R\geq t_{a}\}\sum_{t=t_{a}}^{\Gamma_R} v_t\\
\geq & \mathds{1}\{\Gamma_R\geq t_{a}\}\sum_{t=t_{a}}^{\Gamma_R} v_t,
\end{aligned}
\label{d12} 
\end{equation}
Therefore, based on  (\ref{nd1}) and (\ref{d12}), we have that,
\begin{equation}
\begin{aligned}
&\mathbb{E}_{t_{a}}\left\{(\Gamma_R-t_{a}+1)^{+}\bigg|\mathcal{F}_{t_{a}-1}\right\} \\
\leq &\frac{2\sigma^2}{{\rho_L}^2}  \mathbb{E}_{t_a}\bigg\{\left.V_{\Gamma_R}\mathds{1}\left\{\Gamma_R\geq t_a\right\}\bigg|\mathcal{F}_{t_a-1}\right\}.
\end{aligned}
\end{equation}
which completes the proof for \cref{lemma1}.
\end{proof}
 
\begin{lemma}
\label{lemma2}
The stopping time $\Gamma_R$ of SDPCUSUM and BBCUSUM achieves an equalizer rule, i.e.,
\begin{equation}
\begin{aligned}
J\left(\Gamma_R\right)&=\sup_{t_a}J_{t_a}\left(\Gamma_R\right)=J_1\left(\Gamma_R\right).
\end{aligned}
\end{equation}
\end{lemma}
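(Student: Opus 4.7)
The approach hinges on two structural observations: (i) the recursion $V_K = \max\{V_{K-1},0\} + v_K$ together with $v_t \geq 0$ --- which follows because selecting $\mathcal{U}^{(t)}=\emptyset$ (hence $\boldsymbol{\mu}^{(t)}=\mathbf{0}$) is feasible in \eqref{v_t} and yields objective value $0$ --- ensures $V_t \geq 0$ for all $t$, with $V_0 = 0$; and (ii) each score $v_t$ is a deterministic function of $\mathbf{x}^{(t)}$ alone. Together, these imply that once we condition on $\mathcal{F}_{t_a-1}$, the post-change evolution of $\{V_t\}_{t \geq t_a}$ is determined entirely by the scalar state $V_{t_a-1}$ and the observation stream $\{\mathbf{x}^{(t)}\}_{t \geq t_a}$, whose distribution --- up to the adversarial choice of $\boldsymbol{\theta}^{(t)}$ and $\Delta\mathbf{H}^{(t)}$ --- is shift-invariant in $t_a$.

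Building on this, I would introduce a first-passage map $\tau^{(v)}$ that returns the number of steps for the statistic, initialized at $V_0 = v$, to first cross $h$ under a generic post-change observation stream, and define $g(v) := \sup \mathbb{E}\bigl[\tau^{(v)}\bigr]$, where the supremum runs over admissible post-change parameter trajectories. A pathwise inspection of the recursion shows that $v \mapsto \tau^{(v)}$ is non-increasing for every fixed realization of $\{v_t\}$, which transfers monotonicity to $g$. I would then argue that, for every change time $t_a$, the conditional expectation in $J_{t_a}(\Gamma_R)$ equals $g(V_{t_a-1})$: the Markovian structure of $V$ collapses the past into $V_{t_a-1}$, while the shift-invariance of the adversarial supremum eliminates the explicit dependence on $t_a$.

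From here the conclusion is immediate. Since $V_{t_a-1} \geq 0$ and $g$ is non-increasing, $\operatorname{ess\,sup}_{\mathcal{F}_{t_a-1}} g(V_{t_a-1}) \leq g(0)$, so $J_{t_a}(\Gamma_R) \leq g(0)$ for every $t_a \geq 1$. Choosing $t_a = 1$, where $V_0 = 0$ deterministically, gives $J_1(\Gamma_R) = g(0)$, and therefore $\sup_{t_a} J_{t_a}(\Gamma_R) = J_1(\Gamma_R)$, as claimed.

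The hard part will be rigorously handling the time-varying nature of the post-change observation model: because $\boldsymbol{\theta}^{(t)}$ and $\Delta\mathbf{H}^{(t)}$ drift with $t$, the standard stationarity argument used in classical i.i.d.\ CUSUM does not apply off the shelf. Closing this gap requires leveraging Lorden's worst-case semantics so that the supremum over adversarial trajectories absorbs the temporal variation, rendering $g(v)$ intrinsically independent of the nominal change time; without this step, equating $J_{t_a}(\Gamma_R)$ with $J_1(\Gamma_R)$ cannot be concluded.
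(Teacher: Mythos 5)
Your proposal is correct and follows essentially the same route as the paper's proof: nonnegativity of $v_t$ gives $V_{t_a-1}\geq 0$, pathwise monotonicity of the crossing time in the initial statistic value places the worst case at $V_{t_a-1}=0$, and shift-invariance of the post-change observation distribution identifies $J_{t_a}(\Gamma_R)$ with $J_1(\Gamma_R)$. Your first-passage function $g(v)$ is just a cleaner packaging of the same argument, and the subtlety you flag about the time-varying post-change parameters is real --- the paper handles it only by asserting that $\tilde{x}^{(t_a+t)}$ and $\tilde{x}^{(t_a'+t)}$ are identically distributed, which is exactly the shift-invariance your adversarial supremum is meant to secure.
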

\begin{proof}
For any $K_1 \leq K$, according to (\ref{V_K}) and \eqref{v_t}, we can obtain that,
\begin{equation}
V_{K}
=V_{K_1}+\sum_{t=K_1+1}^{K}v_t.
\label{newadd_1}
\end{equation}
It is seen from (\ref{newadd_1}) that given $ \{\mathbf{x}^{{K_1 + 1}},\mathbf{x}^{{K_1 + 2}}, \cdots, \mathbf{x}^{K} \}$, $V_{K}$ increases as $V_{K_1}$ increases.
Therefore, according to (\ref{GCUSUM}), $\Gamma_R$ decreases as $V_{t_a-1}$ increases considering the event $\{\Gamma_R \leq t_a\}$. 
Given that $V_{t_a - 1} \geq 0$ and the event $\{ V_{t_a -1 } = 0 \in   \mathcal{F}_{t_a-1} \}$, the maximum value of $\mathbb{E}_{t_a}\{(\Gamma_R-t_a+1)^+|\mathcal{F}_{t_a-1}\}$ over $\mathcal{F}_{t_a-1}$ is attained at $V_{t_a - 1} = 0$,
that is,
\begin{equation}
\begin{aligned}
J_{t_{a}}\left(\Gamma_R\right)& =\operatorname{ess}\operatorname*{sup}_{\mathcal{F}_{t_{a}-1}}\mathbb{E}_{t_{a}}\bigg\{(\Gamma_R-t_{a}+1)^{+}\bigg|\mathcal{F}_{t_{a}-1}\bigg\}  \\
&=\mathbb{E}_{t_a}\bigg\{\Gamma_R-t_a+1\bigg|V_{t_a-1}=0\bigg\}.
\label{hhhhh}
\end{aligned}
\end{equation}
Since $v_t \geq 0 $ according to \eqref{v_t}, it follows from (\ref{V_K}) that the event $V_{t_a -1 } = 0$ is equivalent to the event $\{ v_t=0, \forall t=1,2,...t_a-1 \}$, which implies that,
\begin{equation}
J_{t_{a}}\left(\Gamma_R\right)
=\mathbb{E}_{t_a}\bigg \{\Gamma_R-t_a+1 \bigg| v_t=0,  \forall t=1,2,\cdots,t_a-1 \bigg \}.
\label{new_2}
\end{equation}
According to \eqref{eq:model} and (\ref{comple_space_H}), given two distinct change time instants $t_a$
and $t_a^\prime$, the respective $\tilde{x}_m^{(t_a+t)}$ and $\tilde{x}_m^{(t_a^\prime+t)}$ are identically distributed for any $t \geq 0$. As a result, from (\ref{v_t}) that $v_{t_a + t}$ and $v_{t_a^\prime+t}$ also follow the same distribution for any $t \geq 0$. Therefore, by combining (\ref{new_2}) and (\ref{V_K}),  we have, 
\begin{equation}
J_{t_a}\left(\Gamma_R\right)=J_1\left(\Gamma_R\right),\forall t_a\geq1.
\end{equation}
And therefore,
\begin{equation}
J\left(\Gamma_R\right)=\sup_{t_{a}}J_{t_{a}}\left(\Gamma_R\right)=J_{1}\left(\Gamma_R\right),
\end{equation}
which completes the proof.
\end{proof}
Finally, on the basis of above two lemmas, we provide the proof of \cref{theorem_3}. 
\begin{proof}
According to \eqref{eq:delay}, the worst-case expected detection delay of the proposed methods can be written as 
\begin{equation}
J\left(\Gamma_{R}\right)\triangleq\sup_{t_{a}}J_{t_{a}}\left(\Gamma_{R}\right),
\end{equation}
where
\begin{equation}
J_{t_a}\left(T_R\right)\triangleq\text{ess}\sup_{\mathcal{F}_{t_a-1}}\mathbb{E}_{t_a}\left\{\left(T_R-t_a+1\right)^+\bigg|\mathcal{F}_{t_a-1}\right\}. 
\label{ne_1}
\end{equation}
By combining (\ref{ne_1}) and \cref{lemma1}, we have
\begin{equation}
J_{t_a}\left(T_R\right)  \leq  \frac{2\sigma^2}{{\rho_L}^2}  \mathbb{E}_{t_a}\left\{V_{\Gamma_R} {1}\left\{\Gamma_R\geq t_a\right\}\bigg|\mathcal{F}_{t_a-1}\right\}.
\label{ne_2}
\end{equation}
Denote $\Delta$ as the overshoot, at time instant $\Gamma_R$, we have $V_{\Gamma_R}= h + \Delta$.
Therefore, for any given $t_a$, we can obtain,
\begin{equation}
V_{\Gamma_R}\mathds{1}\bigg\{\Gamma_R\geq t_a\bigg\}=(h+\Delta)\mathds{1}\bigg\{\Gamma_R\geq t_a\bigg\}\leq h+\Delta.
\label{new_4}
\end{equation}
Since the stopping time $\Gamma_R$ of SDPCUSUM and BBCUSUM achieves the equalizer rule as demonstrated in \cref{lemma2}, we have
\begin{equation}
\begin{aligned}
&\mathbb{E}_{t_{a}}\bigg\{ V_{\Gamma_R}\mathds{1}\left\{\Gamma_R\geq t_{a}\right\}\bigg|\mathcal{F}_{t_{a}-1}\bigg\}\\
=&\mathbb{E}_1\bigg\{V_{\Gamma_R}\mathds{1}\left\{\Gamma_R\geq1\right\}\bigg\} \leq h+\mathbb{E}_1\left\{\Delta\right\} .
\end{aligned}
\end{equation}
According to Wald’s approximations \cite{tartakovsky2014sequential}, 
the expectation of the overshoots can be ignored when change occurs. As a result, we have 
\begin{equation}
\mathbb{E}_{t_{a}}\bigg\{ V_{\Gamma_R}\mathds{1}\left\{\Gamma_R\geq t_{a}\right\} 
 \bigg|\mathcal{F}_{t_{a}-1}\bigg\}  \approx h  .
\label{ne_3}
\end{equation}
By combining (\ref{ne_2}), (\ref{ne_3}) with  \cref{lemma2} , we can obtain that,
\begin{equation}
J\left(\Gamma_R\right)=J_1\left(\Gamma_R\right) \leq \frac{2 h {\sigma}^2}{ {\rho_L}^2},
\end{equation}
which completes the proof.
\end{proof}

\section{Appendix B: Experimental Details}

Our experiments were conducted on a workstation equipped with an Intel(R) Core(TM) i9-14900K CPU and an NVIDIA GeForce RTX 4090 GPU. The system uses CUDA Version 12.6 to support GPU-accelerated computations.

\subsection{Wireless MIMO Blockage Detection }
\begin{figure*}[htb]
  \centering
  \includegraphics[width=0.7\linewidth]{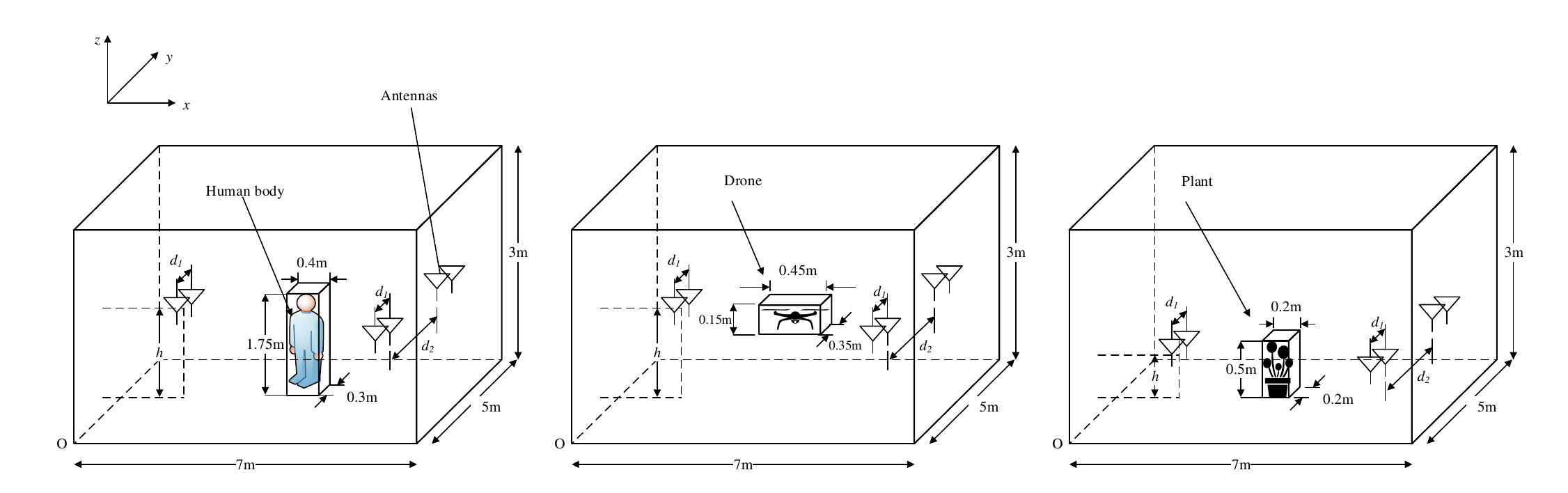}
  \caption{Antenna arrangement for blockage caused by the human body, UAV, and plant.}
  \label{Environment}
\end{figure*}

To evaluate the impact of various occlusions on MIMO signal propagation, we consider realistic blockage scenarios including human body, UAV interference, and vegetation (e.g., branches). These reflect common challenges such as indoor human blockage, drone interference, and natural obstacles. Our experiments use a \(2 \times 4\) MIMO system (2 transmitters, 4 receivers), with the antenna arrangement and environment shown in \cref{Environment}. The room size is \(7 \times 5 \times 3\, m^3\), with transmitting and receiving antennas placed at \(X=1\,m\) and \(6\,m\) respectively, typical for MIMO setups. We set \(d_1=0.1\,m\), \(d_2=2\,m\), and operate at 2.4 GHz with 10 MHz bandwidth. For human and drone blockage, antenna height is \(h=1\,m\). The human body and drone are modeled as cubes sized \(0.4 \times 0.3 \times 1.75\, m^3\) and \(0.45 \times 0.35 \times 0.15\, m^3\), respectively.

\subsection{Detection Time Comparison with GPU and CPU}

To evaluate the detection efficiency under different system scales, we compare the detection time of our GPU-based algorithm with that of the standard MIQP-based solver with gurobipy(Gurobi version: 12.0.2). The system scale is characterized by varying the dimension of the observation vector \( \mathbf{x} \). For each scale, we generate a corresponding system matrix \( \mathbf{H} \) , where each entry in a row is independently and randomly sampled from the set \(\{1, 0, -1\}\). $ \bar{\bm{h}_i}$ represents the estimated observation of $\bm{h}_i$, generated by introducing a random perturbation within the range of 0.1. The polyhedron uncertainty of $\mathbf{H}$ is set to be $ \bm{h}_i \in \mathcal{S}_i$ ,where is 

\[
\mathcal{S}_i = \left\{ \mathbf{h_i} \in \mathbb{R}^n \;\middle|\;
\begin{bmatrix}
\mathbf{I} \\
-\mathbf{I}
\end{bmatrix}
\mathbf{h_i}
\leq
\begin{bmatrix}
\bar{\mathbf{h}_i} + \sigma_0 \\
- \bar{\mathbf{h}_i} + \sigma_0
\end{bmatrix}
\right\}
\]
$\sigma_o$ is set as 0.1.
The $g_i(\cdot)$ in \eqref{eq:lagrangian} are
\begin{equation}
    \begin{aligned}
    g_1 &= \left\| 
    \begin{bmatrix}
    \mu^{+} - \mu^{-} \\
    \frac{\phi - u}{2}
    \end{bmatrix} 
    \right\| - \frac{\phi + u}{2} \leq 0 \\
    g_2 &= \boldsymbol{p}^T_i \boldsymbol{d}_i - \varepsilon_i \leq 0 \\
    g_3 &= \rho_L u - (\mu^{+} + \mu^{-}) \leq 0 \\
    g_4 &= (\mu^{+} + \mu^{-}) - \rho_U u \leq 0 \\
    g_5 &= \mu^{+} + \rho_U b - \rho_U \leq 0 \\
    g_6 &= \mu^{-} - \rho_U b \leq 0
\end{aligned}
\end{equation}
The GPU-based algorithm based on a neural unrolling architecture is set as \ref{fig:neural_unfolding} where number of unrolling layers $K$ are set as 10. The stopping criterion for each unrolling run in \cref{RoS-Guard} is defined as
$\epsilon < 0.01$,
where \(\epsilon\) denotes the relative change in variable updates between iterations,
with an additional safeguard of a maximum iteration 50 limit to ensure termination.
For variables with constraints, during network initialization, they are randomly initialized within the constraint bounds. The data and attack injection setting follow the Data I. Each experiment is repeated 50 times to account for randomness in initialization and input data. The reported detection times are averaged across all runs to ensure statistical reliability.

\subsection{Smart Grid Attack Injection Detection}
\label{Ad_E}
Data \uppercase\expandafter{\romannumeral1} is generated according to the IEEE 14-bus system. The IEEE 14-bus system as well as the
system matrix can be divided into four sub-regions.
The value of the entire system matrix can be found in \cite{methodin14}. 
For simplicity, we present only the matrix corresponding to the $4^{th}$ region.
\begin{equation}
\mathbf{H} = \begin{bmatrix}
-1 & 3 & -1 \\
0 & -1 & 0 \\
0 & 0 & -1 \\
0 & -1 & 1 \\
0 & -1 & 2 
\end{bmatrix}.
\end{equation}
In our experiment, we assume that the system matrix can be accurately determined in all areas except for the $4^{th}$ region.
The polyhedron uncertainty of $\mathbf{H}$ is set to be $ \mathbf{D}_{i} \mathbf{h}_{i}  \leq  \mathbf{c}_{i},~\forall i, $
where $\mathbf{h}_{i}$ represents the $i$-th column of $\mathbf{H}$. 
$\mathbf{D}_{1}$, $\mathbf{D}_{2}$ and $\mathbf{D}_{3}$ are set to be 
$$ \mathbf{D}_{1} = \mathbf{D}_{2} = \mathbf{D}_{3} = \begin{bmatrix}~  \mathbf{I} ~ \\ ~ \mathbf{-I} ~~  \end{bmatrix},$$ 
where  $\mathbf{I}$  represents the $5 \times 5$ identity matrix.  $\mathbf{c}_{1} $, $\mathbf{c}_{2}$ and $\mathbf{c}_{3}$ are set to be
\begin{equation}
\begin{aligned}
& \mathbf{c}_{1} = [ -0.5,0.5,0.5,0.5,0.5,1.5,0.5,0.5,0.5,0.5]^{T}, \\
& \mathbf{c}_{2} = [ 3.5,-0.5,0.5,-0.5,-0.5,-2.5,1.5,0.5,1.5,1.5]^{T}, \\
& \mathbf{c}_{3} = [ -0.5,0.5,-0.5,1.5,2.5,1.5,0.5,1.5,-0.5,-1.5]^{T}. 
\end{aligned}
\end{equation}
Regarding the ellipsoid uncertainty set, the uncertainty associated with each $\mathbf{h}_i$ can be described as follows:
\begin{equation}
\nonumber
\mathbf{h}_i \in \{\overline{ \mathbf{h}}_i +\mathbf{u}\mid  \|\mathbf{u}\|_2\leq 0.36\}, \quad \forall i,
\end{equation}
where $\overline{ \mathbf{h}}_i $ are set to be
\begin{equation}
\begin{aligned}
& \overline{ \mathbf{h}}_1  = [ -1.0,0.1,0.3,-0.2,0.0]^{T}, \\
& \overline{ \mathbf{h}}_2  = [ 3.0,-0.7,0.2,-1.3,-0.9]^{T}, \\
& \overline{ \mathbf{h}}_3  = [ -1.1,0.2,-0.6,0.7,2.0]^{T}. 
\nonumber
\end{aligned}
\end{equation}
In D-norm, it is assumed that the vector has at most $\kappa$ uncertain components, and each component falls within the error interval determined by $\hat{u}$ \cite{yang2014distributed}.
For all the $\mathbf{h}_{i}$, the parameter $\kappa$ and $\hat{u}$ are set to be $4$ and $0.5$, respectively.

\end{document}